\setlist[enumerate]{nosep}
\newlength\Origarrayrulewidth
\newcommand{\Cline}[1]{%
  \noalign{\global\setlength\Origarrayrulewidth{\arrayrulewidth}}%
  \noalign{\global\setlength\arrayrulewidth{1.1pt}}\cline{#1}%
  \noalign{\global\setlength\arrayrulewidth{\Origarrayrulewidth}}%
}
\newcommand\footnoteref[1]{\protected@xdef\@thefnmark{\ref{#1}}\@footnotemark}
\DeclareMathOperator*{\bigpar}{\bigg\Vert}
\DeclareMathOperator*{\nompar}{\Vert}
\DeclareMathOperator*{\bilstm}{\textit{BiLSTM}}
\DeclareMathOperator*{\mlp}{\textit{MLP}}
\newtheorem{theorem}{Theorem}[section]
\newtheorem{lemma}{Lemma}[section]
\newcommand{\vo}{\vec{o}\@ifnextchar{^}{\,}{}}
\begin{document}
\copyrightyear{2020}  
\acmYear{2020}  
\acmConference[WWW '21]{TheWebConf 2021: ACM The Web Conference 2021}{April 19--23, 2021}{Ljubljana, Slovenia}
\acmBooktitle{WWW '21: The Web Conference,
 April 19--23, 2021, Ljubljana, Slovenia}

\settopmatter{authorsperrow=4}
\title{RECON: Relation Extraction using Knowledge Graph Context in a Graph Neural Network}


\author{Anson Bastos}
\email{cs20resch11002@iith.ac.in}
\affiliation{%
  \institution{IIT, Hyderabad and Zerotha Research}
  \country{India}
}

\author{Abhishek Nadgeri}
\email{abhishek22596@gmail.com }

\affiliation{%
  \institution{Zerotha Research and RWTH Aachen}
  \country{Germany}
}

\author{Kuldeep Singh}
\email{kuldeep.singh1@cerence.com}
\affiliation{%
  \institution{Zerotha Research and Cerence GmbH}
   \country{Germany}
}

\author{Isaiah Onando Mulang'}
\email{mulang@iai.uni-bonn.de}
\affiliation{%
  \institution{Zerotha Research and Fraunhofer IAIS, Germany}
}

\author{Saeedeh Shekarpour}
\email{sshekarpour1@udayton.edu}
\affiliation{%
  \institution{University of Dayton, USA}
}

\author{Johannes Hoffart}
\email{johannes.hoffart@gs.com}
\affiliation{%
   \institution{Goldman Sachs, Germany}
}

\author{Manohar Kaul}
\email{mkaul@iith.ac.in}
\affiliation{%
  \institution{IIT Hyderabad, India}
}
\renewcommand{\shortauthors}{Bastos, et al.}
\begin{abstract}
In this paper, we present a novel method named RECON, that automatically identifies relations in a sentence (sentential relation extraction) and aligns to a knowledge graph (KG). RECON uses a graph neural network to learn representations of both the sentence as well as facts stored in a KG, improving the overall extraction quality. These facts, including entity attributes (label, alias, description, instance-of) and factual triples, have not been collectively used in the state of the art methods. We evaluate the effect of various forms of representing the KG context on the performance of RECON. The empirical evaluation on two standard relation extraction datasets shows that RECON significantly outperforms all state of the art methods on NYT Freebase and Wikidata datasets.

\end{abstract}




\maketitle

\section{Introduction}

\begin{figure*}
	\centering
	\includegraphics[width=\textwidth]{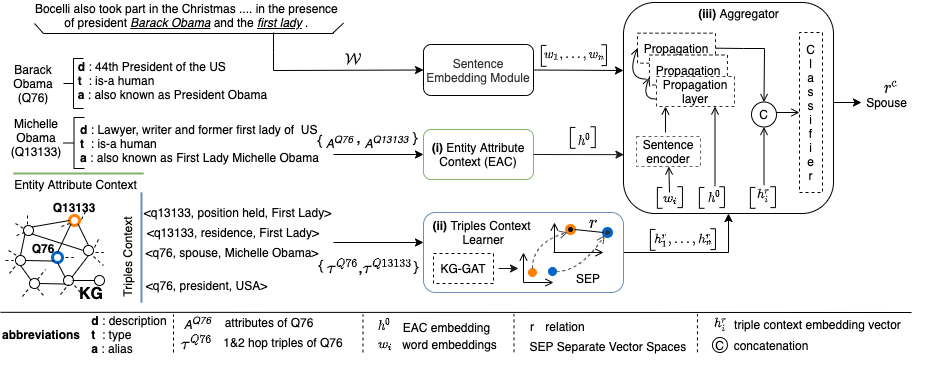}
	\caption{RECON has three building blocks: i) entity attribute context (EAC) encodes context from entity attributes ii) triple context learner independently learns relation and entity embeddings of the KG triples in separate vector spaces iii) a context aggregator (a GNN model) used for consolidating the KG contexts to predict target relation.}
	\label{fig:approach}
	    \vspace{-2mm}
	\end{figure*}

The publicly available Web-scale knowledge graphs (KGs) (e.g., DBpedia \cite{DBLP:conf/semweb/AuerBKLCI07}, Freebase \cite{DBLP:conf/aaai/BollackerCT07}, and Wikidata \cite{DBLP:conf/www/Vrandecic12}) find wide usage in many real world applications such as question answering, fact checking, voice assistants, and search engines \cite{fensel2020we}. Despite the success and popularity, these KGs are not exhaustive. Hence there is a need for approaches that automatically extract knowledge from unstructured text into the KGs \cite{DBLP:conf/aaai/LinLSLZ15}. Distantly supervised \emph{relation extraction (RE)} is one of the knowledge graph completion tasks aiming at determining the entailed relation between two given entities annotated on the text to a background KG \cite{DBLP:conf/aaai/WuFZ19}. For example, given the sentence \textit{"Bocelli also took part in the Christmas in Washington special on Dec 12, in the presence of president \underline{Barack Obama} and the \underline{first lady}"} with annotated entities- \textit{wdt:Q76 (Barack Obama)}\footnote{wdt: binds to \url{https://www.wikidata.org/wiki/}} and \textit{wdt:Q13133(Michelle Obama)};
the RE task aims to infer the semantic relationship. Here \textit{wdt:P26 (spouse)} is the target relation. In this example, one can immediately see the impact of background knowledge: the correct target relation \emph{spouse} is not explicitly stated in the sentence, but given background knowledge about the first lady and her marital status,  the correct relation can be inferred by the model. In cases having no relations, the label ``NA'' is predicted.

Existing RE approaches have mainly relied on the 
multi-instance and distant learning paradigms  \cite{smirnova2018relation}.
Given a bag of sentences (or instances), the \textbf{multi-instance RE} considers all previous occurrences of a given entity pair while predicting the target relation \cite{DBLP:conf/emnlp/VashishthJPBT18}. However, incorporating contextual signals from the previous occurrences of entity pair in the neural models add some noise in the training data, resulting in a negative impact on the overall performance \cite{liu2017soft}. Several approaches (e.g., based on attention mechanism \cite{DBLP:conf/naacl/YeL19}, neural noise converter \cite{DBLP:conf/aaai/WuFZ19}) have been proposed to alleviate the noise from the previous sentences for improving overall relation extraction. 
Additionally, to mitigate the noise in multi-instance setting, there are few approaches that not only use background KGs as a source of target relation but exploit specific properties of KGs as additional contextual features for augmenting the learning model.
Earlier work by \cite{DBLP:conf/aaai/Ji0H017,DBLP:conf/emnlp/VashishthJPBT18} utilizes entity descriptions and entity/relation aliases from the underlying KG as complementary features. 
Work in \cite{DBLP:conf/acl/NathaniCSK19} employs attention-based embeddings of KG triples to feed in a graph attention network for capturing the context.
Overall, the knowledge captured from KG complements the context derived from the text.

In contrast, the \textbf{sentential RE} \cite{DBLP:conf/emnlp/SorokinG17} ignores any other occurrence of
the given entity pair, thereby making the target relation predictions on the sentence level. However, the existing approaches for sentential RE \cite{DBLP:conf/emnlp/SorokinG17,DBLP:conf/acl/ZhuLLFCS19} rely on local features/context present in the sentence and do not incorporate any external features. In this paper, we study the effect of KG context on sentential RE task by proposing a novel method RECON. 
Our approach focuses on an effective representation of the knowledge derived from the KG induced in a graph neural network (GNN). The proposed approach has three building blocks illustrated in the Figure \ref{fig:approach}. Specifically, RECON harnesses the following three novel insights to outperform existing sentential and multi-instance RE methods:

\begin{itemize}
  \item \textit{Entity Attribute Context}: we propose a recurrent neural network based module that learns representations of the given entities expanded from the KG using entity attributes (properties) such as \textit{entity label}, \textit{entity alias}, \textit{entity description} and \textit{entity Instance of (entity type)}. 
  \item \textit{Triple Context Learner}: we aim to utilize a graph attention mechanism to capture both entity and relation features in a given entity's multi-hop neighborhood. By doing so, our hypothesis is to supplement the context derived from the previous module with the additional neighborhood KG triple context. For the same, the second module of RECON independently yet effectively learns entity and relation embeddings of the 1\&2-hop triples of entities using a graph attention network (GAT) \cite{velivckovic2017graph}. 
  \item \textit{Context Aggregator}: our idea is to exploit the message passing capabilities of a graph neural network \cite{DBLP:conf/acl/ZhuLLFCS19} to learn representations of both the sentence and facts stored in a KG. Hence, in the third module of RECON, we employ an aggregator consisting of a GNN and a classifier. It receives as input the sentence embeddings, entity attribute context embeddings, and the triple context embeddings. The aggregator then obtains a homogeneous representation, passed into a classifier to predict the correct relation.
\end{itemize}

We perform exhaustive evaluation to understand the efficacy of RECON in capturing the KG context. Our work has following contributions:
\begin{itemize}
\item RECON: a sentential RE approach that utilizes entity attributes and triple context derived from the Web scale knowledge graphs, induced in a GNN, thereby significantly outperforming the existing baselines on two standard datasets.
\item We augment two datasets: Wikidata dataset \cite{DBLP:conf/emnlp/SorokinG17} and NYT dataset for Freebase \cite{DBLP:conf/pkdd/RiedelYM10} with KG context. Our implementation and datasets are publicly available\footnote{\url{https://github.com/ansonb/RECON}}.
   
\end{itemize}
The structure of the paper is follows: Section \ref{sec:related} reviews the related work. Section 3 formalizes the problem and the proposed approach. Section \ref{sec:experiment} describes experiment setup. Our results and ablation studies are illustrated in Section \ref{sec:evaluation}. We conclude in Section \ref{sec:conclusion}.

\section{Related Work} \label{sec:related}
\par \textbf{Multi-instance RE:} The recent success in RE can attribute to the availability of vast training data curated using distant supervision \cite{DBLP:conf/acl/MintzBSJ09}. Methods for distant supervision assume that if two entities have a relationship in a KG, then all sentences containing those entities express the same relation, this may sometimes lead to noise in the data. To overcome the challenges, researchers in \cite{DBLP:conf/pkdd/RiedelYM10} initiated the multi-instance learning followed by \cite{DBLP:conf/acl/HoffmannZLZW11} which extracted relation from a
bag of sentences. 
Researchers \cite{DBLP:conf/aaai/Ji0H017} attained improved performance by introducing entity descriptions as KG context to supplement the task. The RESIDE approach \cite{DBLP:conf/emnlp/VashishthJPBT18} ignores entity descriptions but utilize entity type along with relation and entity aliases. RELE approach \cite{DBLP:conf/emnlp/HuZSNGY19} jointly learned embeddings of structural information from KGs and textual data from entity descriptions to improve multi-instance RE. Unlike existing approaches where one or other entity attributes are considered, in this work, we combined four typical properties of KG entities for building what we refer as entity attribute context. \\
\textbf{Learning information from KG Triples}: The survey \cite{wang2017knowledge} provides holistic overview of available KG embedding techniques and their application in entity oriented tasks. TransE \cite{DBLP:conf/nips/BordesUGWY13} studied knowledge base completion task using entity and relation embeddings learned in the same vector space. It lacks ability to determine one-to-many, many-to-one, and many-to-many relations. TransH \cite{DBLP:conf/aaai/WangZFC14} has tried to address this problem by learning embeddings on different hyperplanes per relation. However, the entity and relation embeddings are still learned in the same space. TransR \cite{DBLP:conf/aaai/LinLSLZ15} represents entity and relation embeddings in separate vector spaces, which works better on the task of relation prediction and triple classification. They perform a linear transformation from entity to relation embedding vector space. Work by \cite{DBLP:conf/naacl/XuB19} and \cite{DBLP:conf/aaai/Han0S18} are few attempts for jointly learning different representations from text and facts in an existing knowledge graph.
Furthermore, graph attention network (GAT) has been proposed to learn embeddings for graph-structured data \cite{velivckovic2017graph}. KBGAT is an extension of GAT that embeds KG triples by training entities and relations in same vector spaces specifically for relation prediction \cite{DBLP:conf/acl/NathaniCSK19}. However, we argue that entity and relation embedding space should be separated. Moreover, the transformation from entity to relation space should be nonlinear and distinct for every relation. This setting allows the embeddings to be more expressive (section \ref{sec:evaluation}).\\
\textbf{Sentential RE}: There exists a little work on the sentential RE task. The work in \cite{DBLP:conf/emnlp/SorokinG17} established an LSTM-based baseline that learns context from other relations in the sentence when predicting the target relation. \cite{DBLP:conf/acl/ZhuLLFCS19} generate the parameters of graph neural networks (GP-GNN) according to natural language sentences for multi-hop relation reasoning for the entity pair.  

\section{Problem Statement and Approach} \label{sec:problem}
\subsection{Problem Statement}
We define a KG as a tuple $KG = (\mathcal{E},\mathcal{R},\mathcal{T}^+)$ where $\mathcal{E}$ denotes the set of entities (vertices), $\mathcal{R}$ is the set of relations (edges), and $\mathcal{T}^+ \subseteq \mathcal{E} \times \mathcal{R} \times \mathcal{E} $ is a set of all triples. 
A triple $\uptau = (e_h,r,e_t) \in \mathcal{T}^+$ indicates that, for the relation $r \in \mathcal{R}$, $e_h$ is the head entity (origin of the relation) while $e_t$ is the tail entity. Since $KG$ is a multigraph; $e_h=e_t$ may hold and $|\{r_{e_h,e_t}\}|\geq 0$ for any two entities. We define the tuple $(A^e,\tau^e) = \varphi(e)$ obtained from a context retrieval function $\varphi$, that returns, for any given entity $e$, two sets: $A^e$, a set of all attributes and $\tau^e \subset \mathcal{T}^+$ the set of all triples with head at $e$.

A sentence $\mathcal{W} = ( w_1,w_2,...,w_l )$ is a sequence of words.
The set of entities in a sentence is denoted by $\mathcal{M} = \{m_1,m_2,...,m_k\}$ where every $m_k=(w_i,...,w_j)$ is a segment of the sentence $\mathcal{W}$.
Each mention is annotated by an entity from KG as $[m_i:e_j]$ where $e_j \in \mathcal{E}$. Two annotated entities form a pair $p = \langle e_b,e_l\rangle$ when there exists a relationship between them in the sentence (in case no corresponding relation in the KG - label N/A).

The \textit{RE Task} predicts the target relation $r^c \in \mathcal{R}$ for a given pair of entities $\langle e_i,e_j\rangle$ within the sentence $\mathcal{W}$.
If no relation is inferred, it returns 'NA' label.
We attempt the \underline{sentential RE} task which posits that the sentence within which a given pair of entities occurs is the only visible sentence from the bag of sentences. 
All other sentences in the bag are not considered while predicting the correct relation $r^c$. Similar to other researchers~\cite{DBLP:conf/emnlp/SorokinG17}, we view RE as a classification task. However, we aim to model KG contextual information to improve the classification. This is achieved by learning representations of the sets $A^e,\tau^e$,  and $\mathcal{W}$ as described in section \ref{sec:approach}.

\subsection{RECON Approach} \label{sec:approach}
Figure \ref{fig:approach} describes the RECON architecture. The sentence embedding module retrieves the static embeddings of the input sentence. The EAC module (sec. \ref{sec:eac}) takes each entity of the sentence and enrich the entity embeddings with corresponding contextual representation from the KG using entity properties such as aliases, label, description, and instance-of. The Triple context learner module (sec. \ref{sec:triple}) learns a representation of entities and relations in a given entity’s 2-hop neighborhood. A Graph Neural Network is finally used to aggregate the entity attribute, KG triple, and sentence contexts with a relation classification layer generating the final output (sec. \ref{sec:aggre}).
We now present our approach in detail.
\subsubsection{ Entity Attribute Context (EAC)} \label{sec:eac}
The entity attribute context is built from commonly available properties of a KG entity \cite{hogan2020knowledge}:  \textit{entity labels}, \textit{entity alias}, \textit{entity description}, and \textit{entity Instance of}. We extract this information for each entity from the public dump of Freebase \cite{DBLP:conf/aaai/BollackerCT07}, and Wikidata \cite{DBLP:conf/www/Vrandecic12}) depending on the underlying KG (cf. section \ref{sec:experiment}). To formulate our input, we consider the literals of the retrieved entity attributes. For each of these attributes, we concatenate the word and character embeddings and pass them through a bidirectional-LSTM encoder \cite{schuster1997bidirectional}. The final outputs from the BiLSTM network are stacked and given to a one dimensional convolution network (CNN) described in the Figure \ref{fig:entity-attribute} and formalized in equation \ref{eq_1}. The reasons behind choosing CNN are i) to enable a dynamic number of contexts using the max pooling ii) to keep the model invariant of the order in which the context is fed.
\begin{equation} \label{eq_1}
h^o = \textrm{1D\_CNN}(\nompar_{i=0}^{N} [\bilstm (A_i)] )
\end{equation}
where each $A_i$ is attribute of given entity and $\parallel$ is the concatenation.
\begin{figure}[h!]
  \includegraphics[scale=0.40]{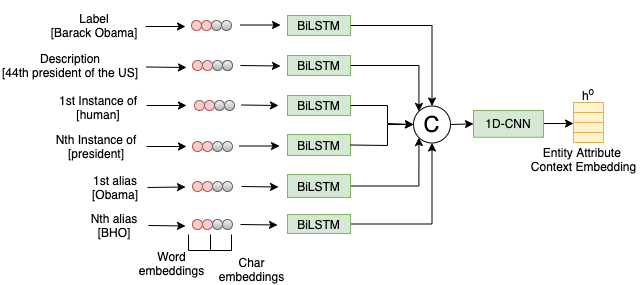}
  \caption{Entity Attribute Context Module}
  \label{fig:entity-attribute}
\end{figure}
\subsubsection{Triple Context Learner (KGGAT-SEP)} \label{sec:triple}
The \textit{KG triple context learner (KGGAT-SEP)} is an extension of KBGAT \cite{DBLP:conf/acl/NathaniCSK19} that retains the capability to capture context from neighboring triples in the KG. In addition, our idea is to learn the entity and relation embeddings of the triples in separate vector spaces to capture more expressive representations.
This is because each entity might be engaged in several relations in various contexts, and different aspects of the entity may participate in representing each relation \cite{DBLP:conf/aaai/LinLSLZ15}.
Let $\vec{e}_h$ and $\vec{e}_t$ be the initial entity vectors and $\vec{r}_k$ 
be a initial relation vector between them representing the triple $\uptau_{htk}$, 
$W$ is the weight metric, then the vector representation of triple is
\begin{equation}\label{eq_triple_kbgat}
 \small
    \vec{\uptau}_{htk} = W[\vec{e_h} \nompar \vec{e_t} \nompar \vec{r_k}]
\end{equation}
where we concatenate the head and tail entity embeddings and relation embedding vector. 
The importance of each triple (i.e. attention values) is represented by $b_{htk}$ and is computed as in equation \ref{eq_imp_kbgat} where \textit{LeakyReLU} is an activation function:
\begin{equation}\label{eq_imp_kbgat}
\small
b_{htk} = LeakyReLU(W_2\vec{\uptau}_{htk})
\end{equation}
To get the relative attention values over the neighboring triples, a softmax is applied to equation
\ref{eq_imp_kbgat}
\begin{equation}\label{eq_softmax_kbgat}
 \small
\alpha_{htk} = softmax_{tk}(b_{htk})\\
        =\frac{exp(b_{htk})}{\sum_{t \in N_{h}}\sum_{r \in R_{ht}}exp(b_{htr})}
\end{equation}
$N_h$ denotes the neighborhood of entity $e_h$ and $R_{ht}$ denotes the set of relations between entities $e_h$ and $e_t$.
The new embedding for the entity $e_h$ is now the weighted sum of the triple embeddings using equations \ref{eq_triple_kbgat} and \ref{eq_softmax_kbgat}. In order to stabilize the learning and encapsulate more information, X independent attention heads have been used and the final embedding is the concatenation of the embedding from each head:
\begin{equation}\label{eq_gat}
 \small
    \vec{e}_h^{~'} = \bigpar_{x=1}^{X} \sigma \left( \sum_{t \in N_{h}}\sum_{r \in R_{ht}}\alpha_{htk}^x\vec{\uptau}_{htk}^{~x} \right) 
\end{equation}
The original entity embedding $\vec{e}_h$ after a transformation, using matrix $W^E$, is added to the equation \ref{eq_gat} to preserve the initial entity embedding information. 
\begin{equation}\label{eq_gat_2}
\small
    \vec{e}_h^{~''} = \vec{e}_h^{~'} + W^{E}\vec{e}_h
\end{equation}
For relation embeddings, a linear transformation is performed on the initial embedding vector, using matrix $W^R$, to match the entity vector's dimension in equation \ref{eq_gat_2}
\begin{equation}\label{eq_gat_r}
    \vec{r}_k^{~''} = W^{R}\vec{r}_k
\end{equation}
Traditionally, the training objective for learning embeddings in same vector spaces are borrowed from \cite{DBLP:conf/nips/BordesUGWY13}. The embeddings here are learned such that, for a valid triple $\uptau_{htk}=(e_h,r_k,e_t)$ the following equation holds where $ \vec{e}_i^{~''}$ is embeddings in entity space.
\begin{equation}\label{eq_same_space_objective}
    \vec{e}_h^{~''}+\vec{r}_k^{~''}=\vec{e}_t^{~''}
\end{equation}
The optimization process tries to satisfy equation \ref{eq_same_space_objective} and the vectors are learned in \textit{same vector space}.
Contrary to the previous equation, we keep entities and relation embeddings in \textbf{separate spaces}. With that, we now need to transform entities from entity spaces to the relation space. We achieve this by applying a nonlinear transformation:
(cf. the theoretical foundation is in the section \ref{sec:theory}).
\begin{equation}\label{eq_2}
    \vec{e}_i^{~r}=\sigma\left(W_r\vec{e}_h^{~''}\right)
\end{equation}
here $\vec{e}_i^{~r}$ 
(where $i=\{h,t\}$) is the relation specific entity vector in the relation embedding space, $W_r$ is the relation specific transformation matrix and $\vec{e}_h^{~''}$ is the corresponding embedding in the entity space from equation \ref{eq_gat_2}. 
We presume that such separation helps to capture a comprehensive representations for relations and entities. Equation \ref{eq_same_space_objective} is now modified as
\begin{equation}\label{eq_sep_space_objective}
\small
    \vec{e}_h^{~r}+\vec{r}_k^{~''}=\vec{e}_t^{~r}
\end{equation}
We define a distance metric $d_{ht}$ for a relation $\vec{r}_k^{~''}$, representing the triple $\uptau_{htk}$ as
\begin{equation}\label{eq_d} 
\small
    d_{\uptau_{htk}} = \vec{e}_h^{~r}+\vec{r}_k^{~''}-\vec{e}_t^{~r}
\end{equation}
A margin ranking loss minimizes the following expression
\begin{equation}\label{eq_mrl}
 \small
    L(\Omega) = \sum_{\uptau_{ht} \in \mathcal{T}^{pos}}\sum_{\uptau_{ht}^{'} \in \mathcal{T}^{neg}} max\{ d_{\uptau_{ht}^{'}} - d_{\uptau_{ht}} + \gamma, 0 \}
\end{equation}
where $\mathcal{T}^{pos}$ is the set of valid triples, $\mathcal{T}^{neg}$ is the set of invalid triples and $\gamma$ is a margin parameter. We consider the actual triples present in the dataset as positive (valid) triples and the rest of the triples, which are not in the dataset as invalid. For example, as we do RE, if in the KG, entities Barack Obama and Michelle Obama have one valid relation "spouse," then the valid triple is <Barack Obama, spouse, Michelle Obama>. The invalid triples will contain relations that do not exist between these two entities.
\subsubsection{Aggregating KG Context} \label{sec:aggre}
For aggregating context from previous two steps, we adapt and modify generated parameter graph neural network (GP-GNN) \cite{DBLP:conf/acl/ZhuLLFCS19} due to its proven ability to enable message passing between nodes. It consists of an encoder module, a propagation module and a classification module. The encoder takes as input the word vectors concatenated to the position vectors from the sentence.
\begin{equation}
    E(w_s^{i,j}) = w_s \nompar p_s^{i,j}  
\end{equation}

where $p$ denotes the position embedding of word spot "s" in the sentence relative to the entity pair's position $i, j$ and $w$ is the word embedding. Position vectors are basically to mark whether the token belongs to head or tail entity or none of them. We use position embedding scheme from \cite{DBLP:conf/acl/ZhuLLFCS19}. We use concatenated word embeddings in a biLSTM followed by a fully connected network for generating transition matrix given as:
\begin{equation}\label{eq_11}
\begin{split}
B_{(i,j)} = [~ \mlp(~\bilstm\limits_{lyr=0}^{n-1}(~E(w_s^{i,j})_{_{s=1}}^{^{l}}~) ~]
\end{split}
\end{equation}
Here $[.]$ denotes conversion of vectors into a matrix, $lyr$ is the layer of biLSTM, $s$ is the index of word in sentence and $l$ is the length of the sentence. For each layer (n) of the propagation module we learn a matrix $B_{i,j}^{(n)}$ using equation \ref{eq_11}. Then, the propagation module learns representations of entity nodes $v$ (layer wise) according to the following equation
\begin{equation}
    h_i^{(n+1)} = \sum_{v_j \in N(v_i)} \sigma(B_{i,j}^{(n)} h_j^{(n)} )
\end{equation}
$N(v_i)$ represents the neighborhood of $v_i$.
Here $h^0$ is the initial entity embedding which is taken from equation \ref{eq_1}. In classification module, the vectors learned by each layer in the propagation module are concatenated and used for linking the relation:
\begin{equation}\label{eq_0}
    r_{v_i,v_j} = \nompar_{i=1}^{N}[h_{v_i}^{(i)} \odot h_{v_j}^{(j)}]^{\top}
\end{equation}
where $\odot$ denotes element wise multiplication. 
We concatenate the entity embeddings learned from the triples context in equation \ref{eq_gat_2} to $r_{v_i,v_j}$ obtained from \ref{eq_0} and feed into classification layer to get the probability of each relation
\begin{equation}\label{eq_prob_same_space}
    P(r \mid h, t, s) = softmax(\mlp([r_{v_i,v_j} \nompar \vec{e}_h^{~''} \nompar \vec{e}_t^{~''}] ))
\end{equation}
where $\vec{e}_h^{~''}$ and $\vec{e}_t^{~''}$ are the entity embeddings learned from previous module in equation \ref{eq_gat_2}. \\
\textbf{Aggregating the separate space embeddings}: The probability in equation \ref{eq_prob_same_space} uses the embeddings learned in the same vector space. For the embeddings learned in separate vector spaces, we compute the similarity of the logits with the corresponding relation vector i.e. we use the embedding learned in equation \ref{eq_2} to find the probability of a triple exhibiting a valid relation. 
For the same, we concatenate the entity embeddings from equation \ref{eq_2} with the Equation \ref{eq_0}. This is then transformed as below:
\begin{equation}\label{eq_h_final}
 v_{htr} =  \sigma\left(W[r_{e_h,e_t} \nompar \vec{e}_h^{~r} \nompar \vec{e}_t^{~r}]\right)
\end{equation}
 Where $v_{htr}$ is a vector obtained by applying a non-linear function $\sigma$ on the final representation in the aggregator. We then compute the distance between this embedding and the relation vector $\vec{r}$ (aka $\vec{r}_k^{~''}$) obtained in the equation \ref{eq_gat_r} to get the probability of the relation existing between the two entities.
\begin{equation}\label{eq_sep_space_sigmoid}
\small
    P(r \mid h,t,\mathcal{W},A,G) = \sigma\left(\vec{r}^{\top}v_{htr}\right)
\end{equation}
where $h,t$ are the head and tail entities, $\mathcal{W}$ is the sentence, $A$ is the context, and $G$ is the computed graph.
Optimizing equation \ref{eq_sep_space_sigmoid} using binary cross entropy loss with negative sampling on the invalid triples is computationally expensive. Hence, we take the translation of this entity pair and compare with every relation. Specifically, obtain the norm of the distance metric as in \ref{eq_d} and concatenate these norms for every relation to get a translation vector.
\begin{equation}\label{eq_htr}
\small
    d_{\uptau_{htr}} = \vec{e}_h^{~r}+\vec{r}_k^{~''}-\vec{e}_t^{~r}
\end{equation}

\begin{equation}\label{eq_sep_space_attn}
\small
    \vec{t_{ht}} = \nompar_{i=1}^{N_r}\left\lVert d_{\uptau_{htr}} \right\rVert
\end{equation}
$\vec{t_{ht}}$ is translation vector of the entity pair $e_h$ and $e_t$ which represents the closeness with each relation $r$ and $N_r$ is the number of relations in the KG. This is concatenated with the vectors learned from the propagation stage and entity embeddings to classify the target relation.
\begin{multline}\label{eq_prob_sep_space_attn}
\small
    P(r \mid h, t, s) =  softmax(\mlp([r_{v_i,v_j}\parallel\vec{e}_h\parallel\vec{e}_t\parallel\vec{t_{ht}}))
\end{multline}    

\section{Experimental Setup}\label{sec:experiment}
\subsection{Datasets} 
We use two standard datasets for our experiment. (i) \textit{Wikidata dataset} \cite{DBLP:conf/emnlp/SorokinG17} created in a distantly-supervised manner by linking the Wikipedia English Corpus to Wikidata and includes sentences with multiple relations. 
It has 353 unique relations, 372,059 sentences in training, and 360,334 for testing.
(ii) \textit{NYT Freebase dataset} which was annotated by linking New York Times articles with Freebase KG \cite{DBLP:conf/pkdd/RiedelYM10}. This dataset has 53 relations (including no relation “NA”). The number of sentences in training and test set are 455,771 and 172,448 respectively.
We augment both datasets with our proposed context. For EAC, we used dumps of Wikidata\footnote{\url{https://dumps.wikimedia.org/wikidatawiki/entities/}} and Freebase\footnote{\url{https://developers.google.com/freebase}} to retrieve entity properties. In addition, the 1\&2 hop triples are retrieved from the local KG associated with each dataset.

\subsection{RECON Configurations}
We configure RECON model applying various contextual input vectors detailed below:\\
\textbf{KGGAT-SEP:} this implementation encompasses only KGGAT-SEP module of RECON (cf. section \ref{sec:triple}) which learns triple context. This is for comparing against \cite{DBLP:conf/acl/NathaniCSK19}. \\
\textbf{RECON-EAC:} induces encoded entity attribute context (from section \ref{sec:eac}) along with sentence embeddings into the propagation layer of context aggregator module. \\
\textbf{RECON-EAC-KGGAT:} along with sentence embeddings, it consumes both types of context i.e., entity context and triple context (cf. section \ref{sec:triple}) where relation and entity embeddings from the triples are trained on same vector space. \\
\textbf{RECON:} similar to RECON-EAC-KGGAT, except entity and relation embeddings for triple context learner are trained in different vector spaces.

\begin{figure*}
  \begin{subfigure}[b]{0.41\textwidth}
    \includegraphics[width=\textwidth]{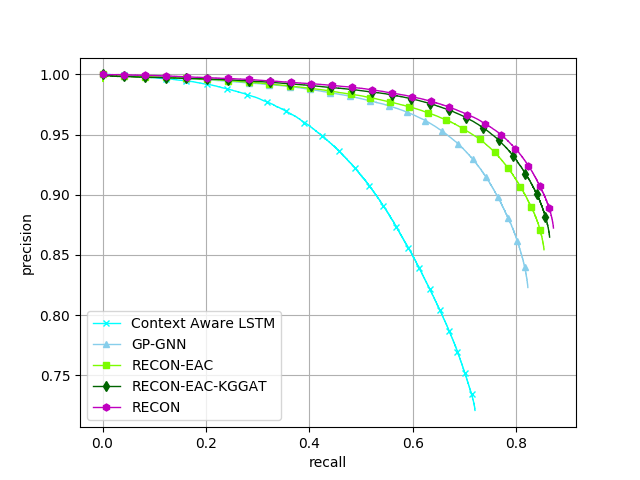}
    \caption{Micro P-R Curve}
    \label{fig:1}
  \end{subfigure}
  \begin{subfigure}[b]{0.41\textwidth}
    \includegraphics[width=\textwidth]{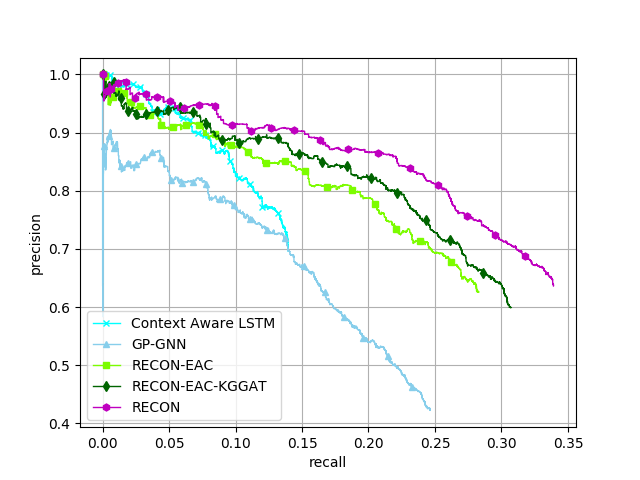}
    \caption{Macro P-R Curve}
    \label{fig:2}
  \end{subfigure}
  \caption{The P-R curves for Sentential RE approaches on Wikidata Dataset. RECON and its configurations maintain a higher precision (against the baselines) over entire recall range.}
  \label{fig:prcurve}
      \vspace{-3mm}
\end{figure*}

\begin{figure}[h!]
  \includegraphics[scale=0.51]{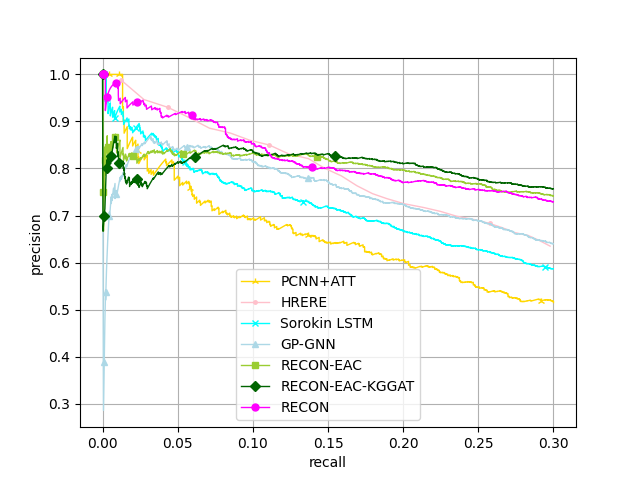}
  \caption{The P-R curves for RE approaches on NYT Freebase Dataset. We observe similar behavior as Figure \ref{fig:prcurve}, where RECON and its configurations consistently maintain a higher precision (against the baselines) over entire recall range.}
  \label{fig:nyt_micro}
\end{figure}

\subsection{Comparative Models}
We consider the recent state-of-the-art approaches for our comparison study as follows:\\
\textbf{KBGAT} \cite{DBLP:conf/acl/NathaniCSK19}: this open-source implementation is compared with our KGGAT-SEP for evaluating the effectiveness of our approach in learning the KG triple context.\\
\textbf{Context-Aware LSTM} \cite{DBLP:conf/emnlp/SorokinG17}: learns context from other relations in the sentence. We reuse its open-source code.\\
\textbf{GP-GNN} \cite{DBLP:conf/acl/ZhuLLFCS19}: proposes multi-hop reasoning between the entity nodes for sentential RE. We employ the open source code.\\
\textbf{Sorokin-LSTM} \cite{DBLP:conf/emnlp/SorokinG17}: NYT Freebase dataset contains one relation per sentence, but Context-Aware LSTM has a prerequisite of having at least two relations in a sentence. Hence, we reuse another baseline, which is an LSTM model without a sentential relation context.\\
\textbf{Multi-instance RE approaches}:
consider context from the surrounding text of a given sentence whereas Sentential RE limits context only to the given sentence.
Our idea is to observe if inducing KG context into a sentential RE model can be a good trade-off against a multi-instance setting.
Hence, we compare RECON and other sentential RE baselines (Sorokin-LSTM \& GP-GNN) with the  multi-instance RE models. For this, we rely on the NYT Freebase dataset, since the other dataset does not have multiple instances for an entity pair. \textbf{HRERE} \cite{DBLP:conf/naacl/XuB19} is the multi-instance SOTA on NYT Freebase dataset that jointly learns different representations from text and KG facts. For the completion of the comparison, performance of four previous baselines are also reported i.e., (i) \textbf{Wu-2019} \cite{DBLP:conf/aaai/WuFZ19}, (ii) \textbf{Yi-Ling-2019} \cite{DBLP:conf/naacl/YeL19}, (iii) \textbf{RESIDE} \cite{DBLP:conf/emnlp/VashishthJPBT18}, and iv) \textbf{PCNN+ATTN} \cite{lin2016neural}. The values are taken from the respective papers.



\subsection{Hyperparameters and Metric}
The EAC module (section \ref{sec:eac}) uses a biLSTM with one hidden layer of size 50. The convolution filter is of width one, and the output size is 8. In KGGAT-SEP (section \ref{sec:triple}), the initial entity and relation embedding size is 50, number of heads are two with two GAT layers, and the final entity and relation embedding size is 200. For the context aggregator module, we adapt the parameters provided in GP-GNN \cite{DBLP:conf/acl/ZhuLLFCS19}. The word embedding dimension is 50 initialized from the Glove embeddings. The position embedding is also kept at 50 dimensions. Encoder uses a layer of bidirectional LSTM of size 256. We use three propagation layers with the entity embedding dimension set at 8. For brevity, complete training details and validation results are in the public Github. \\
\textbf{Metric and Optimization:}  Similar to baseline, we ignore probability predicted for the NA relation during testing on both datasets. We use different metrics depending on the dataset as per the respective baselines for fair comparison. On Wikidata dataset, we adapt (micro and macro) precision (P), recall (R), and F-score (F1) from \cite{DBLP:conf/emnlp/SorokinG17}. For NYT Freebase dataset, we follow the work by \cite{DBLP:conf/naacl/XuB19} that uses (micro) P@10 and P@30. 
An ablation is performed to measure effectiveness of KGGAT-SEP in learning entity and relation embeddings. For this, we use the hits@N, average rank, and average reciprocal rank in similar to \cite{DBLP:conf/acl/NathaniCSK19}. Our work employs the Adam optimizer \cite{DBLP:journals/corr/KingmaB14} with categorical cross entropy loss where each model is run three times on the whole training set. For the P/R curves, we select the results from the first run of each model. Our experiment settings are borrowed from the baselines: GP-GNN \cite{DBLP:conf/acl/ZhuLLFCS19} for the sentential RE and HRERE~\cite{DBLP:conf/naacl/XuB19} for the multi-instance RE.



\begin{table}[ht!]
 \small
    \centering
    \begin{tabular}{p{3cm}|p{0.5cm}p{0.5cm}p{0.5cm}|p{0.5cm}p{0.5cm}p{0.5cm}}
   \toprule
    &\multicolumn{3}{c|}{\textbf{Micro}}& \multicolumn{3}{c}{\textbf{Macro}}\\
     \textbf{Model} &P & R & F1  &P & R & F1 \\
    \midrule
    Context-Aware LSTM \cite{DBLP:conf/emnlp/SorokinG17} & 72.09 &72.06 &72.07 &\textbf{69.21} &13.96 &17.20 \\
     GP-GNN \cite{DBLP:conf/acl/ZhuLLFCS19}& 82.30  &82.28  &82.29 & 42.24  & 24.63  & 31.12\\
   \midrule
    RECON-EAC  & 85.44 & 85.41 & 85.42 & 62.56 & 28.29 & 38.96\\
    RECON-EAC-KGGAT & 86.48 &  86.49 & 86.48 & 59.92 & 30.70 & 40.60
    \\
     RECON & \textbf{87.24} &  \textbf{87.23} & \textbf{87.23} & 63.59 & \textbf{33.91} & \textbf{44.23}
    \\
    \bottomrule
 \end{tabular}
\caption{Comparison of RECON and sentential RE models on the Wikidata dataset. Best values are in bold. Each time a KG context is added in a graph neural network, the performance has increased, resulting in a significant RECON outperformance against all sentential RE baselines.}
\label{tab:results1}
   \vspace{-2mm}
\end{table}

\section{Results} \label{sec:evaluation}
We study following research questions: "\textbf{RQ1}: How effective is RECON in capturing the KG context-induced in a graph neural network for the sentential RE?" The research question is further divided into two sub-research questions: \textbf{RQ1.1}: what is the useful contribution of each entity attribute context (alias, instance-of, type, label in RECON-EAC) for sentential RE? \textbf{RQ1.2}: How effective is separation of entity and relation embedding spaces (RECON-KGGAT-SEP) in capturing the triple context from neighborhood 1\&2 hop triples for the given entities? \textbf{RQ2}: Is the addition of the KG context statistically significant? Each of our experiments systematically studies the research questions in different settings.\\
\textbf{Performance on Wikidata dataset}: Table \ref{tab:results1} summarizes the performance of RECON and its configurations against other sentential RE models. It can be observed that by adding the entity attribute context (RECON-EAC), we surpass the baseline results. The RECON-EAC-KGGAT values indicate that when we further add context from KG triples, there is an improvement. 
However, the final configuration RECON achieves the best results. It validates our hypothesis that RECON is able to capture the KG context effectively. 
The P/R curves are illustrated in the Figure \ref{fig:prcurve}.
RECON steadily achieves higher precision over the entire recall range compared to other models.
In running example (cf. Figure \ref{fig:approach}), RECON could predict the correct relation \textit{wdt:P26 (spouse)} between \textit{wdt:Q76 (Barack Obama)} and \textit{wdt:Q13133 (Michelle Obama)}, while, the other two baselines wrongly predicted the relation \textit{wdt:P155 (follows)}. 

\textbf{Performance on NYT Freebase Dataset:} RECON and its configurations outperforms the sentential RE baselines (cf. Table \ref{tab:results2}). Hence, independent of underlying KG, RECON can still capture sufficient context collectively from entity attributes and factual triples. We also compare the performance of sentential RE models, including RECON and its configurations against multi-instance RE baselines. It can be deducted from Table \ref{tab:results2} that RECON supersedes the performance of multi-instance baselines. Furthermore, the RECON's P/R curve for the NYT Freebase dataset shown in Figure \ref{fig:nyt_micro} maintains a higher precision over the entire recall range. The observation can be interpreted as follows: adding context from the knowledge graphs instead of the bag of sentences for the entity pairs keeps the precision higher over a more extended recall range. Hence, we conclude that RECON is effectively capturing the KG context across KGs, thereby answering the first research question \textbf{RQ1} successfully.


\begin{table}[ht!]
\small
    \centering
    \begin{tabular}{p{1.3cm}|p{3cm}|p{0.75cm}|p{0.75cm}}
     \toprule
     & &\multicolumn{2}{c}{\textbf{Precision}}   \\
     \textbf{Task} & \textbf{Model} & @10\% & @30\% \\
     \midrule
     \multirow{4}{*}{Sentential} & Sorokin-LSTM \cite{DBLP:conf/emnlp/SorokinG17}  & 75.4 & 58.7\\
     & GP-GNN \cite{DBLP:conf/acl/ZhuLLFCS19}  & 81.3 & 63.1\\
      \cline{2-4}
     & RECON-EAC  &  83.5& 73.4 \\
     & RECON-EAC-KGGAT  &  86.2 & 72.1 \\
     & RECON & \textbf{87.5} &  \textbf{74.1} \\
     \cline{1-4}
     \multirow{4}{3pt}{Multi-instance} & HRERE~\cite{DBLP:conf/naacl/XuB19} & 84.9 & 72.8 \\
     & Wu-2019 \cite{DBLP:conf/aaai/WuFZ19}  & 81.7 & 61.8\\
     & Ye-Ling-2019 \cite{DBLP:conf/naacl/YeL19}& 78.9 & 62.4 \\
     & RESIDE \cite{DBLP:conf/emnlp/VashishthJPBT18} & 73.6 &  59.5 \\
     & PCNN+ATTN \cite{lin2016neural} & 69.4 &  51.8 \\
    \bottomrule
     
    \end{tabular}
\caption{Comparison of RECON against baselines (sentential and multi-instance) on the NYT Freebase dataset. Best values are in bold. RECON continues to significantly outperform sentential RE baselines and also surpasses the performance of state of the art multi-instance RE approach.}
\label{tab:results2}
    \vspace{-2mm}
\end{table}

\begin{table}[!htb]
\small
    \centering
    \begin{tabular}{@{}p{2.6cm}|p{0.6cm}| p{0.6cm}|@{}p{1cm}|@{}p{1.2cm}|@{}p{0.75cm}}
        \Cline{1-6}
        \textbf{Compared Models} & \multicolumn{2}{|c|}{\textbf{Contingency}} & \textbf{Statistic} & \textbf{p-value} & \textbf{Dataset} \\
        \hline
         GP-GNN Vs & 568469 & 40882 &4978.84  &0.0 &Wikidata  \\
          RECON-EAC & 63702 & 67713 &  & &\\
          \hline
         RECON-EAC Vs &599135 &33036 &862.38  &$1.5*10^{-189}$ &Wikidata  \\
          RECON-EAC-KGGAT &41029 &67566 &  & &  \\
          \hline
         RECON-EAC-KGGAT &608442 &31722 &455.29  &$5.1*10^{-101}$ &Wikidata  \\
         Vs RECON &37330 &63272 &  & & \\
           \Cline{1-6}
         GP-GNN Vs &158426 &4936 &15.72  &$7.3*10^{-5}$ &Freebase   \\
          RECON-EAC &53392 &3699 &  & &\\
          \hline
           RECON-EAC Vs &160227 &3538 &59.44  &$1.2*10^{-14}$  &Freebase  \\
          RECON-EAC-KGGAT &4218 &4417 & &  \\
          \hline
          
         RECON-EAC-KGGAT &161012 &3433 &54.88  &$1.3*10^{-13}$ &Freebase \\
         Vs RECON &4076 &3879 &  & &\\
        \Cline{1-6}
    \end{tabular}
    \caption{The McNemar's test for statistical significance on the results of both datasets. It can be observed that each of the improvement in the RECON configurations is statistically significant independent of the underlying KG. }
    \label{mcnemars_test}
    \vspace{-3mm}
\end{table}

\subsection{Ablation Studies}
\textbf{Effectiveness of EAC}: We separately studied each entity attribute's effect on the performance of the RECON-EAC. Table \ref{tab:tab-abl1} and Table \ref{tab:tab-abl1nyt} summarize the contribution of the four entity attributes when independently added to the model. The entity type (Instance-of) contributes the least across both datasets. We see that the entity descriptions significantly impact RECON's performance on the Wikidata dataset, while descriptions have not provided much gain on Freebase. The Freebase entity descriptions are the first paragraph from the Wikipedia entity web page, whereas, for Wikidata, descriptions are a human-curated concise form of the text. Mulang' et al. \cite{mulang2020evaluating} also observed that when the Wikipedia descriptions are replaced with the entity descriptions derived from the Wikidata KG, the performance of an entity disambiguation model increases.

The reported study on the EAC module's effectiveness answers our first sub-research question (\textbf{RQ1.1}). We conclude that the contribution of entity attributes in the EAC context varies per underlying KG. Nevertheless, once we induce cumulative context from all entity attributes, we attain a significant jump in the RECON-EAC performance (cf. Table \ref{tab:results1} and Table \ref{tab:results2}).

\begin{table}[!htb]
    \centering
    \begin{tabular}{p{4.5cm}|p{0.8cm}|p{0.8cm}|p{0.8cm}}
        \toprule
        \textbf{Model} & \textbf{P} & \textbf{R} & \textbf{F1} \\
        \midrule
         RECON-EAC(Instance of) & 76.33 & 76.32 & 76.32 \\
        RECON-EAC(label) & 78.64 & 78.70 & 78.67 \\
         RECON-EAC(Alias) & 81.58 & 81.56 & 81.57 \\
          RECON-EAC(Description) & \textbf{83.16} & \textbf{83.18} & \textbf{83.17} \\
        \bottomrule
    \end{tabular}
    \caption{RECON-EAC performance on Wikidata Dataset. The rows comprise of the configuration when context from each entity attribute is added in isolation. We report micro P, R, and F scores. (Best score in bold)}
    \label{tab:tab-abl1}
        \vspace{-2mm}
\end{table} 

\begin{table}[!htb]
    \centering
    \begin{tabular}{p{3.5cm}|p{0.8cm}|p{0.8cm}}
        \toprule
        \textbf{Model} & \textbf{P@10} & \textbf{P@30} \\
        \midrule
         RECON-EAC(Instance of) & 71.83 & 57.52  \\
        RECON-EAC(label) & 78.14 & 66.34 \\
         RECON-EAC(Alias) & \textbf{80.60} & \textbf{67.13} \\
          RECON-EAC(Description) & 72.40 & 67.11 \\
        \bottomrule
    \end{tabular}
    \caption{RECON-EAC performance on NYT Freebase Dataset. The rows comprise of the configuration when context from each entity attribute is added in isolation. We report P@10 and P@30, similar to other NYT dataset experiments. (Best score in bold)}
    \label{tab:tab-abl1nyt}
        \vspace{-2mm}
\end{table}

\begin{table*}[!htbp]
\small
    \centering
    \begin{tabular}{p{8cm}|p{2.8cm}|p{1.2cm}|p{1.2cm}|p{1.2cm}|p{1.2cm}}
        \Cline{1-6}
         & & &\textbf{Context-} & & \\
        \textbf{Sentence} & \textbf{Entities} & \textbf{Correct} & \textbf{ Aware}  & \textbf{GP-GNN}\cite{DBLP:conf/acl/ZhuLLFCS19} & \textbf{RECON} \\
        & &\textbf{Relation} &\textbf{LSTM}\cite{DBLP:conf/emnlp/SorokinG17} & &  \\
        \hline
         \multirow{2}{8cm}{1. Specifically , the rapper listed Suzanne Vega , Led Zeppelin , Talking Heads , Eminem , and Spice Girls. } & Q5608  : Eminem & P106 & P31 & P31 & P106 \\
        & Q2252262 : rapper & Occupation& Instance of & Instance Of & Occupation \\
       \hline
        \multirow{2}{8cm}{2. Bocelli also took part in the Christmas in Washington special on Dec 12, in the presence of president Barack Obama and the first lady}  & Q76 : Barack Obama    & P26  & P155 & P155  & P26 \\
        &  Q13133 : Michelle Obama  & spouse & follows& follows & spouse \\
         &  & &  &  &\\
        
        \hline
         \multirow{2}{8cm} {3. It was kept from number one by Queen's Bohemian Rhapsody } & Q15862 : Queen & P175  & P50 & P50  & P175 \\
        &  Q187745 : Bohemian Rhapsody& performer & author & author & performer \\
         &  \hspace{1.35cm} & &  & & \\

        
        
        \Cline{1-6}
    \end{tabular}
    \caption{Sample sentence examples from the Wikidata dataset. RECON is able to predict the relations which are not explicitly observable from the sentence itself.}
    
    \label{tab:tab1}
\end{table*}

\begin{table}[!htb]
    \centering
    \begin{tabular}{p{1.8cm}|p{1.5cm}|p{0.8cm}|p{0.8cm}|p{2.0cm}}
       \toprule
        \textbf{Model} & \%\textbf{Hits @10} & \textbf{MR} & \textbf{MRR}& \textbf{Dataset} \\
       \midrule
        KBGAT & 65.8 & 35.2 & 0.36 & Wikidata\\
        KGGAT-SEP  & \textbf{72.6} &\textbf{29} & \textbf{0.38}& Wikidata \\
        \Cline{1-5}
         KBGAT & 85.8 & 7.48 & 21.6 & NYT Freebase\\
        KGGAT-SEP  & \textbf{88.4} &\textbf{5.42} & \textbf{32.3}& NYT Freebase \\
      \bottomrule
    \end{tabular}
    \caption{Comparing KGGAT-SEP and KBGAT for triple classification task on both Datasets. We conclude that separating the entity and relation embedding space has been beneficial for the triple classification task, hence, contributing positively to RECON performance. (cf. Table \ref{tab:results1} and \ref{tab:results2}).}
    \label{tab:tab-abl2}
        \vspace{-2mm}
\end{table}

\textbf{Understanding the KG triple Context}: To understand the effect of relying on one single embedding space or two separate spaces, we conducted an ablation study for the triple classification task. We performed a ranking of all the triples for a given entity pair and obtained hits@N, average rank, and Mean Reciprocal Rank (MRR). Hits@10 denotes the fraction of the actual triples that are returned in the top 10 predicted triples. Table \ref{tab:tab-abl2} illustrates that the KGGAT-SEP (separate spaces) exceeds KBGAT (single space) by a large margin on the triple classification task. Training in separate vector spaces facilitates learning more expressive embeddings of the entities and relations in the triple classification task. The positive results validate the effectiveness of the KGGAT-SEP module and answers the research question \textbf{RQ1.2}. However, when we trained entity and relation embeddings of KG triples in separate spaces, improvements are marginal for the sentential RE task (cf. Table \ref{tab:results1}). We could interpret this behavior as : the model may have already learned relevant information from the sentence and the triple context before we separate vector spaces. Also, in our case, the computed graph is sparse for sentential RE, i.e., few relations per entity prevents effective learning of good representation \cite{DBLP:conf/acl/NathaniCSK19}. We believe sparseness of the computed graph has hindered effective learning of the entity embeddings. It requires further investigation, and we plan it for our future work.

\textbf{Statistical Significance of RECON}: The McNemar's test for statistical significance has been used to find if the reduction in error at each of the incremental stages in RECON are significant. The test is primarily used to compare two supervised classification models \cite{dietterich1998approximate}. The results are shown in Table \ref{mcnemars_test}.
For the column "contingency table" (2x2 contingency table), the values of the first row and second column ($RW$) represent the number of instances that model 1 predicted correctly and model 2 incorrectly. Similarly the values of the second row and first column gives the number of instances that model 2 predicted correctly and model 1 predicted incorrectly ($WR$). The statistic here is 
\[\frac{(RW-WR)^2}{RW+WR}\]
The differences in the models are said to be statistically significant if the $p-value<0.05$ \cite{dietterich1998approximate}. On both datasets, for all RECON configurations, the results are statistically significant, illustrating our approach's robustness (also answering second research question \textbf{RQ2}). In the contingency table, the ($RW$) values provide an exciting insight. For example, in the first row of the Table \ref{mcnemars_test}, there are 40882 sentences for which adding the RECON-EAC context has negatively resulted in the performance compared to GP-GNN. This opens up a new research question that how can one intelligently select the KG context based on the sentence before feeding it into the model. We leave the detailed exploration for future work. \\
\textbf{Performance on Human Annotation Dataset}: To provide a comprehensive ablation study, \cite{DBLP:conf/acl/ZhuLLFCS19} provided a human evaluation setting and reports Micro P, R, and F1 values. Following the same setting, we asked five annotators\footnote{Annotators were well-educated university students.} to annotate randomly selected sentences from Wikidata dataset \cite{DBLP:conf/emnlp/SorokinG17}. The task was to see whether a distantly supervised dataset is right for every pair of entities. Sentences accepted by all annotators are part of the human-annotated dataset. There are 500 sentences in this test set. Table \ref{tab:human} reports RECON performance against the sentential baselines. We could see that RECON and its configurations continue to outperform other sentential RE baselines. The results further re-assure the robustness of our proposed approach. 

\begin{figure}[h!]
  \includegraphics[scale=0.51]{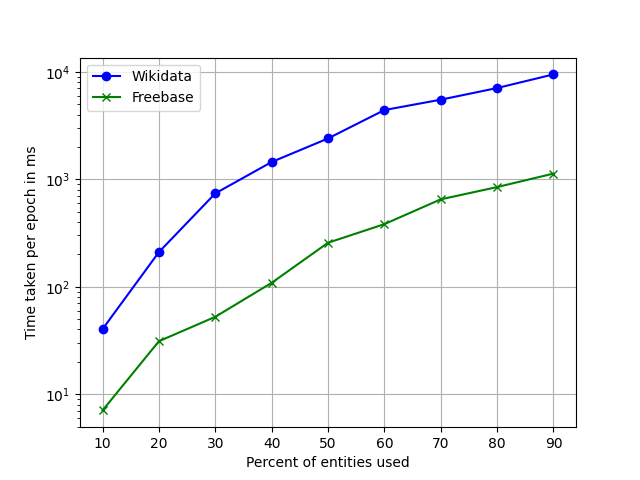}
  \caption{Scalability of Triple Context Learner (KGGAT-SEP) on Wikidata and NYT Freebase datasets. When we incrementally added entity nodes in the KB to capture the triple context, the training time increases by a polynomial factor.} 
  \label{fig:scale}
\end{figure}

\begin{table*}[!htb]
    \centering
    \begin{tabular}{p{2.5cm}|p{0.8cm}p{0.8cm}|p{0.8cm}p{0.8cm}|p{1.0cm}p{1.0cm}|p{1.2cm}p{1.2cm}|p{0.8cm}p{0.8cm}}
        \toprule
        Relation type & \multicolumn{2}{c|}{Context Aware}& \multicolumn{2}{c|}{GPGNN} & \multicolumn{2}{c|}{RECON-EAC} & \multicolumn{2}{c|}{RECON-EAC-KGGAT} & \multicolumn{2}{c}{RECON} \\
         & P & R & P & R& P & R & P & R & P & R \\
        \midrule
        COUNTRY & 0.449 & 0.435 & 0.873& 0.931& 0.952& 0.951& 0.953& \textbf{0.956}& \textbf{0.955}& 0.949  \\
        LOCATED IN & 0.476& 0.130& 0.463& 0.074& 0.462& 0.355& \textbf{0.556}& 0.300& 0.398& \textbf{0.481}  \\
        SHARES BORDER & 0.724& 0.725& 0.732& 0.817& 0.838& 0.843& 0.839& \textbf{0.868}& \textbf{0.862}& 0.829   \\ 
        INSTANCE OF & 0.748& 0.745& 0.780& 0.748& \textbf{0.916}& 0.912& 0.857& \textbf{0.938}& 0.896& 0.916   \\
        SPORT & 0.966& 0.968& 0.962& 0.975& 0.988& 0.991& \textbf{0.989}& 0.990& 0.987& \textbf{0.991}  \\
        CITIZENSHIP & 0.853& 0.895& 0.848& 0.913& 0.963& 0.968& \textbf{0.964}& \textbf{0.971}& 0.962& 0.966  \\
        PART OF & 0.462& 0.427& 0.443& 0.441& 0.558& 0.662& 0.596& \textbf{0.622}& \textbf{0.628}& 0.565  \\
        SUBCLASS OF & 0.469& 0.375& 0.435& 0.498& 0.645& 0.709& \textbf{0.640}& 0.619& 0.588& \textbf{0.772}  \\
        \bottomrule
    \end{tabular}
    \caption{Precision and Recall of the top relations (as per number of occurrences) in the Wikidata dataset. Induction of KG context in RECON and configurations demonstrate the most improvement on precision across all relation categories.}
    \label{tab:tab-toprelation}
        \vspace{-2mm}
\end{table*} 

\subsubsection{Case Studies}
We conducted three case studies. For the first case study, Table \ref{tab:tab1} demonstrates RECON's performance against two sentential baselines: Context-Aware LSTM \cite{DBLP:conf/emnlp/SorokinG17} and GP-GNN \cite{DBLP:conf/acl/ZhuLLFCS19} on a few randomly selected sentences from the Wikidata dataset. We can see that these sentences don't directly contain much information regarding the potential relationship between two entities (the relations are implicitly coded in the text). For example, in the first sentence, the relation between the entities rapper and Eminem is "occupation." The baselines predicted "Instance of" as the target relation considering sentential context is limited. However, the Wikidata description of the entity Q8589(Eminem) is "American rapper, producer and actor". Once we feed description in our model as context for this sentence, RECON predicts the correct relation. 

\begin{table}[!htb]
    \centering
   \begin{tabular}{p{4.5cm}|p{0.8cm}|p{0.8cm}|p{0.8cm}}
       \toprule
        & & & \\
      \textbf{Model} & \textbf{P} & \textbf{R} & \textbf{F1} \\
        \Cline{1-4}
         Context Aware LSTM \cite{DBLP:conf/emnlp/SorokinG17} & 77.77 & 78.69 & 78.23 \\
       GP-GNN \cite{DBLP:conf/acl/ZhuLLFCS19} & 81.99 & 82.31 & 82.15 \\
        \hline
         RECON-EAC & 86.10 & 86.58 & 86.33 \\
          RECON-KBGAT& 86.93 & 87.16 & 87.04 \\
           RECON& \textbf{87.34} & \textbf{87.55} & \textbf{87.44} \\
        \bottomrule
    \end{tabular}
    \caption{Sentential RE performance on Human Annotation Dataset. RECON again outperforms the baselines. We report Micro P,R, and F1 values. (Best score in bold)}
    \label{tab:human}
        \vspace{-2mm}
\end{table}

Sorokin et al. \cite{DBLP:conf/emnlp/SorokinG17} provided a study to analyze the impact of their approach on top relations (acc. to the number of occurrences) in Wikidata dataset. Hence, in the second case study, we compare the performance of RECON against sentential RE baselines for the top relations in Wikidata dataset (cf. Table \ref{tab:tab-toprelation}). We conclude that the KG context has positively impacted all top relation categories and appears to be especially useful for taxonomy relations (INSTANCE OF, SUBCLASS OF, PART OF). 

The third case study focuses on the scalability of Triple Context Learner (KGGAT-SEP) on both datasets. We incrementally add a fraction of entity nodes in the KB to capture the neighboring triples' context. Our idea here is to study how training times scale with the size of the considered KB. Figure \ref{fig:scale} illustrates that when we systematically add entity nodes in the KB, the time increases by a polynomial factor, which is expected since we consider the 2 hop neighborhood of the nodes.

\section{Conclusion and Future Directions} \label{sec:conclusion}
This paper presents RECON, a sentential RE approach that integrates sufficient context from a background KG. Our empirical study shows that KG context provides valuable additional signals when the context of the RE task is limited to a single sentence. Gleaning from our evaluations, we conclude three significant findings: i) the simplest form of KG context like entity description already provide ample signals to improve the performance of GNNs. We also see that proper encoding of combined entity attributes (labels, descriptions, instance of, and aliases) results in more impacting knowledge representation. ii) Although graph attention networks provide one of the best avenue to encode KG triples, more expressive embeddings can be achieved when entity and relation embeddings are learned in separate vector spaces. iii) Finally, due to the proposed KG context and encoding thereof, RECON transcends the SOTA in sentential RE while also achieving SOTA results against multi-instance RE models. The Multi-instance setting, which adds context from the previous sentences of the bag is a widely used practice in the research community since 2012 \cite{DBLP:conf/emnlp/SurdeanuTNM12,DBLP:conf/naacl/XuB19,DBLP:conf/aaai/WuFZ19}. We submit that sentential RE models induced with effectively learned KG context could be a good trade-off compared to the multi-instance setting. We expect the research community to look deeper into this potential trade-off for relation extraction.

Based on our findings, exhaustive evaluations, and gained insights in this paper, we point readers with the following future research directions: 1) Results reported in Table \ref{mcnemars_test} illustrate that there exist several sentences for which KG context offered minimal or negative impact. Hence, it remains an open question of how an approach intelligently selects a specific form of the context based on the input sentence. 2) We suggest further investigation on optimizing the training of embeddings in separate vector spaces for RE. We also found that combining the triple context with the entity attribute context offered minimal gain to the model. Hence, we recommend jointly training the entity attribute and triple context as a viable path for future work. 3) The applicability of RECON in an industrial scale setting was out of the paper's scope. The researchers with access to the industrial research ecosystem can study how RECON and other sentential RE baselines can be applied to industrial applications. 4) The data quality of the derived KG context directly impacts the performance of knowledge-intense information extraction methods \cite{weichselbraun2018mining}. The effect of data quality of the KG context on RECON is not studied in this paper's scope and is a viable next step.

\section*{Acknowledgment} We thank Satish Suggala for additional server access and anonymous reviewers for very constructive reviews.

\bibliographystyle{ACM-Reference-Format}
\bibliography{bibliography}
\section{Appendix}
\subsection{Theoretical Motivation} \label{sec:theory}
We define a set of theorems that motivated our approach RECON and provided a theoretical foundation.
\begin{lemma}\label{lemma1}
If entity and relation embeddings are expressed in the same vector space, there cannot be more than one distinct relation per entity pair
\end{lemma}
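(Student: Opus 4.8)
The plan is to argue by contradiction, exploiting the translational constraint imposed by the same-space objective in equation \ref{eq_same_space_objective}. First I would suppose that some entity pair $\langle e_h, e_t \rangle$ admits two \emph{distinct} relations $r_k$ and $r_{k'}$ with $k \neq k'$, so that both triples $(e_h, r_k, e_t)$ and $(e_h, r_{k'}, e_t)$ are valid. Since all embeddings live in a single vector space, the objective forces the translation identity to hold for each valid triple simultaneously, giving $\vec{e}_h^{~''} + \vec{r}_k^{~''} = \vec{e}_t^{~''}$ and $\vec{e}_h^{~''} + \vec{r}_{k'}^{~''} = \vec{e}_t^{~''}$. (For the impossibility result I treat the identity as holding exactly, i.e. the idealized faithful representation that the optimization targets.)

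The key step is then to subtract these two identities. The head and tail embeddings are shared across the two triples, being properties of the entities rather than of the relation, so the difference collapses to $\vec{r}_k^{~''} - \vec{r}_{k'}^{~''} = \vec{0}$, that is $\vec{r}_k^{~''} = \vec{r}_{k'}^{~''}$. Equivalently, one can read off directly from the single identity that the relation vector is pinned down as $\vec{r}_k^{~''} = \vec{e}_t^{~''} - \vec{e}_h^{~''}$, a quantity that depends only on the pair $\langle e_h, e_t \rangle$ and not on the index $k$; hence every relation holding between the pair is forced to receive the identical embedding.

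To close the argument I would invoke the mild and standard assumption that the embedding map is injective on the relation set, i.e. distinct relations receive distinct vectors, which is precisely what any faithful representation requires. Under this assumption, $\vec{r}_k^{~''} = \vec{r}_{k'}^{~''}$ forces $k = k'$, contradicting the distinctness of $r_k$ and $r_{k'}$. Therefore at most one distinct relation per entity pair can be represented, establishing the claim.

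I expect the main obstacle to be conceptual rather than computational: making precise what \emph{distinct} means and stating the injectivity assumption cleanly, since the arithmetic reduces to a one-line subtraction. This limitation is exactly the expressiveness bottleneck of single-space (TransE-style) embeddings that motivates separating the entity and relation spaces through the nonlinear, relation-specific transformation in equation \ref{eq_2}.
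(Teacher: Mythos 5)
Your proof is correct and follows essentially the same route as the paper's: write the translation identity for both triples over the same entity pair, subtract, and conclude the two relation vectors coincide. The only difference is cosmetic — you make explicit the injectivity assumption (distinct relations must receive distinct vectors) that the paper leaves implicit when it concludes $\vec{r}_1 = \vec{r}_2$ is a contradiction.
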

\begin{proof}
Consider two entities $\vec{e}_1$ and $\vec{e}_2$. Consider a relation $\vec{r}_1$ between them. We want to have these vectors satisfy the triangle law of vector addition as below
\begin{equation}\label{lemma1_tl1}
\vec{e}_1 + \vec{r}_1 = \vec{e}_2    
\end{equation}
Now assume another relation $\vec{r}_2$ between $\vec{e}_1$ and $\vec{e}_2$ (where $\vec{e}_1$ is the subject). Thus we have,
\begin{equation}\label{lemma1_tl2}
\vec{e}_1 + \vec{r}_2 = \vec{e}_2    
\end{equation}
From lemmas \ref{lemma1_tl1} and \ref{lemma1_tl2} we get: $\vec{r}_1 = \vec{r}_2$
\end{proof}

\begin{lemma}\label{lemma2}
If entity and relation embeddings are expressed in the same vector space, there can not exist a single common relation between an entity and two different, directly connected entities
\end{lemma}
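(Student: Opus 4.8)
The plan is to prove this by contradiction, mirroring the structure of the proof of Lemma \ref{lemma1} but fixing the head entity and the relation rather than fixing the entity pair. Whereas Lemma \ref{lemma1} held the pair $(\vec{e}_1,\vec{e}_2)$ fixed and showed that the relation connecting them is forced to be unique, here I would hold the subject entity $\vec{e}_1$ and a single relation $\vec{r}$ fixed and show that the object entity is forced to be unique, so that $\vec{r}$ cannot point from $\vec{e}_1$ to two genuinely distinct entities.

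First I would set up the contradiction hypothesis: suppose $\vec{e}_1$ is directly connected by a single common relation $\vec{r}$ to two distinct entities $\vec{e}_2$ and $\vec{e}_3$, with $\vec{e}_1$ as the subject in both triples. Under the same-space assumption, every valid triple must satisfy the triangle law of vector addition as in Equation \ref{lemma1_tl1}, which here gives
\begin{equation}
\vec{e}_1 + \vec{r} = \vec{e}_2 \quad\text{and}\quad \vec{e}_1 + \vec{r} = \vec{e}_3 .
\end{equation}
Subtracting the two identities eliminates the common term $\vec{e}_1 + \vec{r}$ and immediately yields $\vec{e}_2 = \vec{e}_3$, contradicting the assumption that the two connected entities are different. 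Hence no such configuration can exist in a single shared vector space.

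There is essentially no computational difficulty in this argument; it is a one-line cancellation dual to Lemma \ref{lemma1}. The only point requiring care is making the statement precise: I would make explicit that "a single common relation between an entity and two directly connected entities" means both triples share $\vec{e}_1$ as the head (subject) and the identical relation vector $\vec{r}$, since the triangle-law constraint is directional and the conclusion hinges on that shared orientation. Given this reading, the result follows at once, and together with Lemma \ref{lemma1} it shows that a single shared embedding space can encode neither the multi-valued nor the inverse-functional structure that real KG multigraphs exhibit, which is precisely the motivation for the separate-space nonlinear transformation introduced in Equation \ref{eq_2}.
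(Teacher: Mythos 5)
Your argument is exactly the paper's: both write $\vec{e}_1 + \vec{r} = \vec{e}_2$ and $\vec{e}_1 + \vec{r} = \vec{e}_3$ from the triangle law and subtract to force $\vec{e}_2 = \vec{e}_3$ (the paper calls this a mode collapse). Your proposal is correct and takes essentially the same approach, with the added (and reasonable) clarification that both triples must share $\vec{e}_1$ as the head.
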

\begin{proof}
Consider $\vec{e}_1$ and $\vec{e}_2$ to have relation $\vec{r}_1$. Consider $\vec{e}_1$ and $\vec{e}_3$ to have the same relation $\vec{r}_1$. Then,

\begin{equation} \label{eq_a}
\begin{aligned}
\vec{e}_1 + \vec{r}_1 = \vec{e}_2;   \vec{e}_1 + \vec{r}_1 = \vec{e}_3; \\
\therefore \vec{e}_2 - \vec{e}_3 = \vec{0}; \vec{e}_2 = \vec{e}_3 
\end{aligned}
\end{equation}

\noindent We call this problem a mode collapse as the two separate entity embeddings collapse into a single vector.
\end{proof}

\begin{lemma}\label{lemma3}
If entity and relation embeddings are expressed in the same vector space, no entity is sharing a common relation between two indirectly related entities
\end{lemma}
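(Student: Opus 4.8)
The plan is to reduce Lemma~\ref{lemma3} to the same translation-identity machinery used in Lemmas~\ref{lemma1} and~\ref{lemma2}, now propagated along a multi-hop path rather than a single edge. First I would make precise what ``indirectly related'' means in the same-space model: two entities $\vec{e}_2$ and $\vec{e}_3$ are indirectly related when they are joined by a path of length at least two, say through an intermediate entity, so that applying the triangle law edge-by-edge yields
\begin{equation}\label{lemma3_path}
\vec{e}_2 + \vec{r}_a + \vec{r}_b = \vec{e}_3,
\end{equation}
where $\vec{r}_a,\vec{r}_b$ are the relations along the two hops and, because they are genuinely distinct endpoints of a nontrivial path, $\vec{r}_a + \vec{r}_b \neq \vec{0}$.

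Next I would assume, for contradiction, that a single entity $\vec{e}_1$ shares one common relation $\vec{r}$ with both $\vec{e}_2$ and $\vec{e}_3$ as subject, exactly in the spirit of Lemma~\ref{lemma2}. The translation identity then gives
\begin{equation}\label{lemma3_common}
\vec{e}_1 + \vec{r} = \vec{e}_2; \quad \vec{e}_1 + \vec{r} = \vec{e}_3,
\end{equation}
and subtracting these two equations collapses the endpoints, $\vec{e}_2 = \vec{e}_3$, reproducing the mode-collapse of Lemma~\ref{lemma2}. Substituting $\vec{e}_2 = \vec{e}_3$ into~\eqref{lemma3_path} forces $\vec{r}_a + \vec{r}_b = \vec{0}$, which contradicts the nondegeneracy of the connecting path. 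Hence no such common-relation entity can exist, which is the claim.

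The main obstacle is not the algebra, which is the same two-line subtraction as before, but pinning down the definition so the contradiction is airtight: I must rule out the degenerate situation in which the relation vectors along the path cancel, since then $\vec{e}_2$ and $\vec{e}_3$ would coincide and the statement would be vacuous. I would handle this by folding nondegeneracy into the meaning of ``indirectly related'' (distinct endpoints joined by a path whose relation vectors do not sum to the zero vector). For a cleaner formulation I would generalize~\eqref{lemma3_path} to an arbitrary-length path
\begin{equation}\label{lemma3_general}
\vec{e}_3 = \vec{e}_2 + \sum_{i} \vec{r}_i, \quad \sum_{i} \vec{r}_i \neq \vec{0},
\end{equation}
after which the identical subtraction of~\eqref{lemma3_common} again produces $\sum_i \vec{r}_i = \vec{0}$ and closes the proof. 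This also makes transparent why separating the entity and relation spaces (equations~\eqref{eq_2} and~\eqref{eq_sep_space_objective}) escapes the obstruction, since the nonlinear, relation-specific map $\sigma(W_r\,\cdot\,)$ breaks the additive cancellation that drives the contradiction here.
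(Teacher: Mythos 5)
Your proof is correct and follows essentially the same route as the paper: both formalize ``indirectly related'' as $\vec{e}_2-\vec{e}_3\neq\vec{0}$ arising from a multi-hop translation identity (the paper uses the specific two-hop case through a common neighbor with inverse relations, giving $\vec{e}_2-\vec{e}_3=2\vec{r}_1$), and both then invoke the Lemma~\ref{lemma2} mode-collapse subtraction to force $\vec{e}_2=\vec{e}_3$ and contradict the nondegeneracy of the path. Your generalization to an arbitrary-length path with $\sum_i\vec{r}_i\neq\vec{0}$ is a mild, harmless strengthening of the same argument.
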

\begin{proof}
Consider $\vec{e}_1$ and $\vec{e}_2$ to have a relation $\vec{r}_1$. Consider $\vec{e}_1$ and $\vec{e}_3$ to have a relation $\vec{r}_3$. Let $\vec{r}_1$ and $\vec{r}_3$ be inverse relations. Assume $\vec{r}_1, \vec{r}_2 \neq 0$

\begin{equation} \label{eq_b}
\begin{aligned}
\vec{r}_1=-\vec{r}_2;\vec{r}_1=-\vec{r}_2 \\
\vec{e}_1+\vec{r}_2=\vec{e}_3; \vec{e}_2-\vec{e}_3=2\vec{r}_1
\end{aligned}
\end{equation}
Now consider $\vec{e}_4$ to have a common relation with $\vec{e}_2$ and $\vec{e}_3$. Let this relation be $\vec{r}_3$.

\begin{equation} \label{eq_c}
\begin{aligned}
\vec{e}_2+\vec{r}_3=\vec{e}_4 ; \vec{e}_3+\vec{r}_3=\vec{e}_4\\
\vec{e}_2-\vec{e}_3=\vec{0}; \vec{r}_1=\vec{0}
\end{aligned}
\end{equation}
Which contradicts the assumption
\end{proof}

\begin{lemma}\label{lemma4}
If $f_r$ is an invertible and distributive function/transform for a relation $\vec{r}$, then for an entity sharing a common relation between two other distinct entities, this function causes the embeddings of the two entities to be merged into one
\end{lemma}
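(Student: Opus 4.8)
The plan is to mirror the mode-collapse argument of Lemma \ref{lemma2}, but now with the relation-specific transform $f_r$ interposed between the entity space and the relation space. First I would fix the separate-space translation objective that the embeddings are trained to satisfy: for a valid triple in which $\vec{e}_1$ is the head and $\vec{r}$ is the relation, we require (the analogue of equation \ref{eq_sep_space_objective}, with $f_r$ playing the role of the transform into relation space)
\[
f_r(\vec{e}_1) + \vec{r} = f_r(\vec{e}_j),
\]
where $\vec{e}_j$ is the tail entity.

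Next I would instantiate the hypothesis of the statement: let $\vec{e}_1$ share the common relation $\vec{r}$ with two distinct entities $\vec{e}_2$ and $\vec{e}_3$. Applying the objective to both triples gives
\[
f_r(\vec{e}_1) + \vec{r} = f_r(\vec{e}_2), \qquad f_r(\vec{e}_1) + \vec{r} = f_r(\vec{e}_3),
\]
and subtracting eliminates the common term $f_r(\vec{e}_1) + \vec{r}$, leaving $f_r(\vec{e}_2) = f_r(\vec{e}_3)$. The heart of the argument is then to exploit the two stated properties of $f_r$. Using distributivity (additivity, $f_r(\vec{a} + \vec{b}) = f_r(\vec{a}) + f_r(\vec{b})$), I would rewrite $f_r(\vec{e}_2) - f_r(\vec{e}_3) = f_r(\vec{e}_2 - \vec{e}_3) = \vec{0}$. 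Additivity also forces $f_r(\vec{0}) = \vec{0}$, and since $f_r$ is invertible, hence injective, the only preimage of $\vec{0}$ is $\vec{0}$; therefore $\vec{e}_2 - \vec{e}_3 = \vec{0}$, i.e.\ $\vec{e}_2 = \vec{e}_3$. This is exactly the mode collapse of Lemma \ref{lemma2}: two distinct entity embeddings are forced to merge into one.

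The main obstacle is not the algebra, which is short, but pinning down the intended reading of ``distributive'' and making the $f_r(\vec{0}) = \vec{0}$ step rigorous. It is precisely this linearity-type property that a linear map (for instance the matrix transform of TransR) satisfies, and that the nonlinear transform $\sigma(W_r\vec{e}_h)$ of equation \ref{eq_2} is designed to violate. I would therefore frame the lemma in the write-up as a negative result that motivates the nonlinearity: the conclusion depends essentially on $f_r$ being additive, so replacing it with a genuinely nonlinear transform breaks the derivation at the step $f_r(\vec{e}_2) - f_r(\vec{e}_3) = f_r(\vec{e}_2 - \vec{e}_3)$ and thereby escapes the collapse, which is the design choice RECON makes in equation \ref{eq_2}.
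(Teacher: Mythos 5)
Your argument is essentially the paper's own proof: both apply the translation objective to the two triples, cancel the common term $f_r(\vec{e}_1)+\vec{r}$, invoke distributivity to move the difference inside $f_r$, and then use invertibility to conclude $\vec{e}_2=\vec{e}_3$ (your variant routes through $f_r(\vec{0})=\vec{0}$ and injectivity rather than applying $f_r^{-1}$ to both sides, which is a cosmetic difference and, if anything, slightly more careful). Your closing remark about the lemma motivating the nonlinear transform of equation \ref{eq_2} matches the paper's own framing immediately after the proof.
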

\begin{proof}
Let's assume a transformation function $f_r$ that transforms from the entity to the relation space. Assuming the triangle law holds we have,
\[f_r(\vec{e}_1)+\vec{r}_1=f_r(\vec{e}_2) \: \mbox{and}\]
\[f_r(\vec{e}_1)+\vec{r}_1=f_r(\vec{e}_3)\]
\[\therefore f_r(\vec{e}_2)-f_r(\vec{e}_1)=f_r(\vec{e}_3)-f_r(\vec{e}_1)\]
\[f_r(\vec{e}_2-\vec{e}_1)=f_r(\vec{e}_3-\vec{e}_1) \:  \mbox{...since $f_r$ is distributive}\]
\[f_r^{-1} * f_r(\vec{e}_2-\vec{e}_1)=f_r^{-1} * f_r(e3-e1) \: \]
\[\mbox{..since $f_r$ is invertible}\]
\[\vec{e}_2-\vec{e}_1=\vec{e}_3=\vec{e}_1\]
\[\vec{e}_2=\vec{e}_3\]
However we may want to have $\vec{e}_2$ separate from $\vec{e}_3$. 
\end{proof}
\noindent The affine transform as used by TransR\cite{DBLP:conf/aaai/LinLSLZ15} belongs to this class of transform. Hence we propose adding a non-linear transform.

\begin{lemma}\label{lemma5}
If $\mathcal{T}_g$ is the set of triples learned under a common transform $f_g$ and $\mathcal{T}_l$ is the set of triples learned under a transform $f_l$ which is distinct per relation then $\mathcal{T}_g \subsetneq \mathcal{T}_l$ i.e. $\mathcal{T}_g$ is a strict subset of $\mathcal{T}_l$
\end{lemma}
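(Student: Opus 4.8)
The plan is to establish the two inclusions separately: first the containment $\mathcal{T}_g \subseteq \mathcal{T}_l$, and then strictness by exhibiting a triple configuration that lies in $\mathcal{T}_l$ but not in $\mathcal{T}_g$. Throughout I will adopt the reading that a triple $(e_h, r, e_t)$ is \emph{learned} under a transform $f$ when, after mapping the head and tail into the relation space, the translation identity of equation \ref{eq_sep_space_objective} holds, i.e. $f(\vec{e}_h) + \vec{r} = f(\vec{e}_t)$. Accordingly, $\mathcal{T}_g$ is the collection of triples jointly representable when a single global $f_g$ is shared across all relations, whereas $\mathcal{T}_l$ is the analogous collection when the transform $f_{l,r}$ is allowed to vary per relation $r$.

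For the containment I would argue by specialization. A common transform $f_g$ is exactly the degenerate member of the per-relation family $\{f_{l,r}\}_{r \in \mathcal{R}}$ in which every map coincides, $f_{l,r} = f_g$ for all $r$. Hence any set of triples simultaneously satisfiable under $f_g$ is a fortiori satisfiable by the per-relation family obtained from this choice, giving $\mathcal{T}_g \subseteq \mathcal{T}_l$.

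For strictness I would reuse the obstruction already isolated in Lemma \ref{lemma1}. Take two entities $\vec{e}_1, \vec{e}_2$ together with two distinct relations $\vec{r}_1 \neq \vec{r}_2$, each asserted to hold between the ordered pair $(\vec{e}_1, \vec{e}_2)$. Under a common transform the two constraints $f_g(\vec{e}_1) + \vec{r}_1 = f_g(\vec{e}_2)$ and $f_g(\vec{e}_1) + \vec{r}_2 = f_g(\vec{e}_2)$ force $\vec{r}_1 = \vec{r}_2$ upon subtraction (regardless of whether $f_g$ is linear or nonlinear), contradicting $\vec{r}_1 \neq \vec{r}_2$; so this configuration is excluded from $\mathcal{T}_g$. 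With a per-relation family, by contrast, the two constraints involve the \emph{different} transforms $f_{l,r_1}$ and $f_{l,r_2}$ and are therefore decoupled: each can be solved in isolation, so one may choose $f_{l,r_1}, f_{l,r_2}$ realizing both triples with $\vec{r}_1 \neq \vec{r}_2$. This places the configuration in $\mathcal{T}_l \setminus \mathcal{T}_g$ and yields the strict inclusion.

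The main obstacle I anticipate is making the existence claim in the strictness step rigorous: I must argue that the per-relation transforms can actually be instantiated to meet the decoupled constraints, not merely that the constraints no longer collide. This reduces to exhibiting, for each relation independently, a transform in the admissible nonlinear class of equation \ref{eq_2} satisfying a single affine equation in the entity embeddings, which is unobstructed precisely because the per-relation decoupling removes the cross-relation coupling that drove the collapse phenomena of Lemmas \ref{lemma1}--\ref{lemma4}.
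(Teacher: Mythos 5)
Your proposal is correct and follows essentially the same route as the paper's own proof: the containment by specializing the per-relation family to the single common transform, and strictness via the same witness configuration of two distinct relations between one ordered entity pair, which collapses to $\vec{r}_1=\vec{r}_2$ under a shared transform but decouples under per-relation transforms. Your closing remark about rigorously instantiating the per-relation transforms is a fair point of care that the paper itself also leaves implicit.
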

\begin{proof}
We prove this lemma in two parts.
First we show that $\mathcal{T}_g \subseteq \mathcal{T}_l$ then we show that $\mathcal{T}_l \nsubseteq \mathcal{T}_g$.
\par \noindent 1. The first part is straightforward as we can set $f_l = f_g$ and make $\mathcal{T}_g \subseteq \mathcal{T}_l$
\par \noindent 2. For showing the second part we consider the following system of triples
Consider relations $\vec{r}_1$ and $\vec{r}_2$ between entities $\vec{e}_1$ and $\vec{e}_2$ and $\vec{r}_1 \neq \vec{r}_2$
We define a common transform $f_g$ such that
\[f_g(\vec{e}_1)+\vec{r}_1=f_g(\vec{e}_2) \: \mbox{and}\] 
\[f_g(\vec{e}_1)+\vec{r}_2=f_g(\vec{e}_2)\]
\[\therefore \vec{r}_1=\vec{r}_2\]
	
\par \noindent For the per relation transform we can define a function $f_{r_1}$ for $r_1$ and $f_{r_2}$ for $r_2$ such that \[f_{r_1}(\vec{e}_1)+\vec{r}_1=f_{r_1}(\vec{e}_2) \: \mbox{and}\] 
\[f_{r_2}(\vec{e}_1)+\vec{r}_2=f_{r_2}(\vec{e}_2)\]
such that $\vec{r}_1 \neq \vec{r}_2$
\par \noindent Thus $\mathcal{T}_l \nsubseteq \mathcal{T}_g$, and hence the proof.
\end{proof}

\begin{lemma}\label{lemma6}
If $\mathcal{T}_{gca}$ is the set of triples that can be learned under a global context aware transform $f_{gca}$ and $\mathcal{T}_{lca}$ is the set of transforms learned under a local context aware transform then $\mathcal{T}_{lca} \subsetneq \mathcal{T}_{gca}$.
By context here, we mean the KG triples, global context refers to all the triples in the KG the current entities are a part of, and local context indicates the triple under consideration.
\end{lemma}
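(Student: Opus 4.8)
The plan is to mirror the two-part structure used in the proof of Lemma~\ref{lemma5}: first establish the inclusion $\mathcal{T}_{lca} \subseteq \mathcal{T}_{gca}$, and then exhibit a single triple configuration realizing the strict separation $\mathcal{T}_{gca} \nsubseteq \mathcal{T}_{lca}$. Throughout, as in Lemmas~\ref{lemma4} and~\ref{lemma5}, I read ``a configuration is learnable under a transform'' as the existence of entity embeddings together with a transform of the prescribed class for which the triangle law $f(\vec{e}_h) + \vec{r} = f(\vec{e}_t)$ holds on every triple while the underlying entity embeddings stay distinct.

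The inclusion follows from information containment. The local context (the single triple under consideration) is always contained in the global context (all triples incident to the entities), so any global-context-aware transform can discard the surplus neighborhood information and evaluate a prescribed $f_{lca}$ on the local coordinates alone. Every configuration certified by some local transform is therefore certified by a corresponding global one, yielding $\mathcal{T}_{lca} \subseteq \mathcal{T}_{gca}$.

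For strictness I would recycle the mode-collapse obstruction of Lemmas~\ref{lemma2} and~\ref{lemma4}. I construct a configuration in which one entity $\vec{e}_1$ carries the same relation $\vec{r}_1$ to two distinct entities $\vec{e}_2$ and $\vec{e}_3$, and I arrange $\vec{e}_2$ and $\vec{e}_3$ to lie in disjoint neighborhoods (participating in distinct auxiliary triples). A local-context-aware transform sees only $\vec{r}_1$ as a parameter and hence reduces to the relation-specific map $f_{\vec{r}_1}$ of Lemma~\ref{lemma4}; the two triangle equalities $f_{\vec{r}_1}(\vec{e}_1)+\vec{r}_1 = f_{\vec{r}_1}(\vec{e}_2)$ and $f_{\vec{r}_1}(\vec{e}_1)+\vec{r}_1 = f_{\vec{r}_1}(\vec{e}_3)$ then force $f_{\vec{r}_1}(\vec{e}_2) = f_{\vec{r}_1}(\vec{e}_3)$, and invertibility collapses $\vec{e}_2 = \vec{e}_3$, contradicting distinctness, so the configuration is excluded from $\mathcal{T}_{lca}$. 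A global-context-aware transform instead lets the images of $\vec{e}_2$ and $\vec{e}_3$ depend on their differing neighborhoods, so the forced relation-space coincidence need not propagate back to an entity-space coincidence, placing the configuration in $\mathcal{T}_{gca}$. Combining the two parts gives $\mathcal{T}_{lca} \subsetneq \mathcal{T}_{gca}$.

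The main obstacle I anticipate is the strictness direction: I must verify that granting the transform global awareness genuinely dissolves the collapse rather than merely displacing it to another triple of the enlarged system. Concretely, I would fix small explicit neighborhoods for $\vec{e}_2$ and $\vec{e}_3$ and exhibit embeddings satisfying \emph{all} triangle equalities of the constructed configuration simultaneously, thereby confirming global consistency. This is the one place where the argument must leave the purely algebraic manipulations of the preceding lemmas and commit to a concrete witness, and it is where I would spend most of the care.
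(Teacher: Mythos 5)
Your proposal follows essentially the same route as the paper's proof: the inclusion $\mathcal{T}_{lca} \subseteq \mathcal{T}_{gca}$ by having the global transform ignore the neighborhood, and strictness via a configuration where one entity bears the same relation to two distinct entities, which collapses under a local transform by Lemma~\ref{lemma4} but survives under a global one whose images depend on the differing neighborhoods. The paper instantiates the global witness concretely as an attention-weighted sum $f_{gca}(\vec{e}_2)=\sum_{j\in N_r(\vec{e}_1)}\alpha_j f_r(e_j)$ with $\alpha_j=1/N_r$, which is exactly the explicit neighborhood construction you flag as the remaining step.
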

\begin{proof}
We proceed similar to lemma \ref{lemma5}.
\par \noindent 1. We can make $f_{gca} = f_{lca}$ by ignoring the global context and thus $\mathcal{T}_{lca} \subseteq \mathcal{T}_{gca}$
\par \noindent 2. We define a globally context aware transform as below:
\[f_{gca}(\vec{e}_1) = f_r(\vec{e}_1)\]
\[f_{gca}(\vec{e}_2)= \sum_{j \in N_r(\vec{e}_1)}\alpha_j*f_r(e_j)\]
Where $\alpha_j$ is the attention value learned for the triple $<\vec{e}_1,\vec{r},\vec{e}_j>$
\par \noindent In a simple setting we can have $\alpha_j= \frac{1}{N_r}$ and learn 
\[\vec{r}=f_{gca}(\vec{e}_2)-f_{gca}(\vec{e}_1)=f_{gca}(\vec{e}_3)-f_{gca}(\vec{e}_1)\]
With $\vec{e}_2 \neq \vec{e}_3$
\par \noindent However in a local context aware transform $f_{lca}$ we have,
\[f_{lca}(\vec{e}_1)+\vec{r}=f_{lca}(\vec{e}_2)\]
\[f_{lca}(\vec{e}_1)+\vec{r}=f_{lca}(\vec{e}_3)\]
From lemma \ref{lemma4} $\vec{e}_2 = \vec{e}_3$ and thus we can not have both $<\vec{e}_1,\vec{r},\vec{e}_2>$ and $<\vec{e}_1,\vec{r},\vec{e}_3>$ in $T_l$
\par \noindent Thus $\mathcal{T}_{gca} \nsubseteq \mathcal{T}_{lca}$ and hence the proof
\end{proof}

\begin{theorem}\label{theorem1}
Global context aware transform that is distinct for every relation for learning relation and entity embeddings in separate vector spaces is strictly more expressive than i) Learning the same embedding space ii) Using a common transform for every relation iii) Using local context only
\end{theorem}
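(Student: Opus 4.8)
The plan is to prove Theorem~\ref{theorem1} as a direct corollary of the expressiveness lemmas already established, treating ``strictly more expressive'' as the statement that the set of learnable triple configurations is a strict superset. The key observation is that the theorem combines three independent comparisons, each of which has already been isolated in a preceding lemma. Accordingly, I would structure the proof as a composition of the relevant strict-inclusion results rather than re-deriving anything from the vector-addition axioms.

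First I would fix notation by letting $\mathcal{T}_{\text{full}}$ denote the set of triples learnable under a global-context-aware transform that is distinct per relation and operates across separate entity and relation spaces. The goal is to show that $\mathcal{T}_{\text{full}}$ strictly contains the triple set of each of the three competing schemes. For claim (iii), local context only, I would invoke Lemma~\ref{lemma6} directly: it already establishes $\mathcal{T}_{lca} \subsetneq \mathcal{T}_{gca}$, i.e.\ that a globally context-aware transform learns strictly more triples than a local one. For claim (ii), using a common transform for every relation, I would invoke Lemma~\ref{lemma5}, which gives $\mathcal{T}_g \subsetneq \mathcal{T}_l$, so a per-relation transform is strictly more expressive than a shared one. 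For claim (i), the same vector space, I would appeal to Lemmas~\ref{lemma1}, \ref{lemma2}, and \ref{lemma3}: these show that in a single shared space one cannot represent two distinct relations per pair, cannot share a relation across two directly connected entities, and cannot handle the inverse-relation configuration. Each of these is a triple configuration that the separate-space nonlinear construction of equations~\eqref{eq_2} and~\eqref{eq_sep_space_objective} can represent, while the inclusion in the reverse direction is trivial by restricting the nonlinear map to the degenerate same-space case.

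The main step is to argue that these three strict inclusions hold \emph{simultaneously} under the single combined transform, rather than merely pairwise under separate constructions. I would make this explicit by noting that each ``easy direction'' ($\subseteq$) holds because the richer model can always specialise: setting the relation-specific matrices $W_r$ equal across relations recovers the common-transform case, ignoring the neighbourhood aggregation recovers the local case, and collapsing the relation space onto the entity space recovers the shared-space case. Thus $\mathcal{T}_{\text{full}}$ contains all three competitors' triple sets. The strictness of each inclusion is then witnessed by the explicit counterexample configurations produced in the cited lemmas (e.g.\ the two-relations-per-pair system and the inverse-relation system), each of which lies in $\mathcal{T}_{\text{full}}$ but is provably excluded from the corresponding weaker class.

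The part I expect to require the most care is ensuring the counterexamples are not merely excluded from one weaker model but genuinely \emph{included} in $\mathcal{T}_{\text{full}}$; the lemmas as stated mostly establish impossibility for the weaker classes and do not always exhibit a concrete realisation in the full model. I would therefore add a short constructive remark showing that the nonlinear, per-relation map $\vec{e}_i^{~r}=\sigma(W_r\vec{e}_h^{~''})$ breaks the distributivity and invertibility hypotheses underlying Lemma~\ref{lemma4}, so the mode-collapse argument no longer applies and the offending configurations become simultaneously representable. With that realisation secured, strictness follows and the three inclusions combine to give the theorem.
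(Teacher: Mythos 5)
Your proposal follows essentially the same route as the paper, whose entire proof of Theorem~\ref{theorem1} is the single line ``Follows from lemma \ref{lemma1} to \ref{lemma6}''; you decompose the claim into the same three pairwise comparisons and invoke the same lemmas (Lemma~\ref{lemma6} for local vs.\ global context, Lemma~\ref{lemma5} for common vs.\ per-relation transforms, and Lemmas~\ref{lemma1}--\ref{lemma3} for the shared-space case). Your extra care about the easy inclusion directions and about actually realising the counterexample configurations inside the full model is a genuine improvement in rigour over the paper's one-line argument, but it is an elaboration of the same approach rather than a different one.
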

\begin{proof}
Follows from lemma \ref{lemma1} to \ref{lemma6}
\end{proof}

\begin{theorem}\label{algo1_t1}
 There exists an optimum point for the ranking loss between the triplet vector additions of positive and negative triples, which can be traversed with decreasing loss at each step of the optimization from any point in the embedding space, and as such, an optimum optimization algorithm should be able to find such a point
\end{theorem}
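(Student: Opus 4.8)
The plan is to exploit the hinge structure of the margin ranking loss in \ref{eq_mrl} by passing to the reduced variables given by the residual norms $d_{\uptau} = \lVert \vec e_h^{~r} + \vec r_k^{~''} - \vec e_t^{~r}\rVert$ of \ref{eq_d}. First I would rewrite $L(\Omega)$ as a function $\tilde L$ of the vector of distances $(d_\uptau)_{\uptau \in \mathcal T^{pos}\cup\mathcal T^{neg}}$, observing that each summand $\max\{d_{\uptau'} - d_\uptau + \gamma, 0\}$ is the maximum of two functions that are affine in these distances. Since a pointwise maximum of affine functions is convex and a sum of convex functions is convex, $\tilde L$ is a convex, piecewise-linear function of the distance vector on the nonnegative orthant. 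This reformulation is the key structural observation, as it converts the non-convex landscape in $\Omega$ into a convex one in the auxiliary coordinates.

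Next I would identify the global optimum. Because every summand is nonnegative, $\tilde L \ge 0$, and the value $0$ is attained precisely when $d_\uptau + \gamma \le d_{\uptau'}$ for every positive/negative pair; setting all positive-triple distances to $0$ and all negative-triple distances to $\gamma$ realizes this in distance space. The existence of a parameter configuration achieving (or approaching) this is exactly where the separate-space, per-relation nonlinear transform of \ref{eq_2} buys expressiveness: by Theorem \ref{theorem1} the attainable set of distance vectors is strictly larger than under a shared space or a common transform, so the feasibility obstructions exposed in Lemmas \ref{lemma1}--\ref{lemma4} no longer force the infimum above $0$. I would record the global optimum as $d^\star$ with $\tilde L(d^\star)=\min \tilde L$.

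Then I would construct the monotone descent path. Given any starting distance vector $d^0$, consider the straight segment $d(s)=(1-s)d^0 + s\,d^\star$ for $s\in[0,1]$. Convexity of $\tilde L$ together with the fact that the right endpoint is a minimizer forces $s \mapsto \tilde L(d(s))$ to be non-increasing on $[0,1]$ (a convex function whose minimum sits at the right endpoint cannot increase anywhere on the interval), so the loss decreases monotonically toward the optimum; a (sub)gradient step in distance space always aligns with this direction, which is what ``decreasing loss at each step'' requires. The final step is to pull this back to the embedding space: since the map $\Omega \mapsto (d_\uptau(\Omega))$ is continuous and, in high enough embedding dimension, has enough degrees of freedom to realize the segment, I would lift $d(s)$ to a continuous parameter path along which $L$ inherits the same monotone decrease, and conclude that a gradient-following algorithm can track it to an optimum.

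I expect the lifting step to be the main obstacle, for two coupled reasons. First, the residual norms are not free coordinates: a single entity or relation embedding appears in many triples, so the achievable set of distance vectors is a constrained image of the parameterization rather than the full orthant, and one must argue that the convex descent segment stays inside this image. Second, even when the segment is realizable, the parameter-space gradient is the pullback of the distance-space subgradient through the Jacobian of $\sigma(W_r\,\cdot)$, so I must rule out that this Jacobian annihilates the descent direction and manufactures a spurious stationary point. Handling this carefully --- most cleanly by invoking surjectivity of the embedding map onto the feasible distance region together with continuity and openness of $\sigma$ --- is the crux; the convexity and existence arguments above are by comparison routine.
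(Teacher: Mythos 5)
Your route is genuinely different from the paper's, and the place where it stalls is precisely where the theorem's content lives. You pass to the vector of per-triple residual norms $d_\tau$ and correctly observe that the margin loss is convex and piecewise linear in those coordinates, with a monotone segment to a minimizer; that part is fine but easy, because the non-convexity of the problem is entirely contained in the map from embeddings to those norms. The theorem claims a descent path ``from any point in the embedding space,'' so the lifting step you defer is not a technicality to be dispatched by ``surjectivity onto the feasible distance region'' (which is circular --- that region is by definition the image). The achievable set of distance vectors is a coupled, nonconvex image, since each entity and relation embedding enters many residuals; the straight segment $(1-s)\,d^0+s\,d^\star$ has no reason to stay inside it; and even where it does, the parameter-space gradient is the distance-space subgradient pushed through a Jacobian that can vanish or lose rank, manufacturing exactly the spurious stationary points you need to exclude. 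Your appeal to Theorem \ref{theorem1} also overreaches: it says the per-relation nonlinear transform is strictly more expressive than the alternatives, not that every dataset's positive triples are exactly realizable with zero residual, so even the existence of your target $d^\star$ at loss zero is not secured.

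The paper avoids (rather than solves) this coupling by a much cruder reduction: it sets the margin to zero, discards the negative-triple term, holds relation vectors fixed, and aggregates all residuals into a single vector $E=\sum_i(e_2^i-e_1^i)$, so the surface it analyzes is the norm of a sum, $\lVert R-E\rVert$ --- a cone in $E$ --- rather than your sum of norms. The payoff is that $E$ is a linear image of the entity embeddings, so a descent direction in the auxiliary coordinate lifts trivially to parameter space; the price is that $\lVert R-E\rVert$ only lower-bounds the true loss, so the paper's own argument is likewise incomplete. To salvage your version you would need to characterize the image of the embedding-to-distance map and show the subgradient pullback is nondegenerate along your segment; neither is routine, and as written your proposal establishes monotone descent only in a coordinate system that the optimizer does not actually move in.
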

\begin{proof}
\par \noindent Let us define the framework of the ranking loss as below.
\par \noindent Consider a positive triple ($e_1,r,e_2$) and a negative triple ($e_3,r,e_4$). The vector addition for the first triple would give $t_1=norm(\vec{e}_1+\vec{r}-\vec{e}_2)$ and for the second would give $t_2=norm(\vec{e}_3+\vec{r}-\vec{e}_4)$. The margin loss would then be defined as $max(0,margin-(t_2-t_1))$.
\par \noindent If we take the margin to be zero and ignore the term $t_2$ we get $loss=max(0,t_1)$. Since the norm has to be $>=0, t1>=0$, hence, the loss becomes minimum when $t_1=0$. Removing the trivial case of all entity embeddings=$\vec{0}$, we define the loss space as follows. Without loss of generality we take the relation vectors to be fixed. For a triple ($\vec{e}_1,\vec{r},\vec{e}_2$) we take the difference $e_2-e_1$. The loss for this triple then becomes $r-(e_2-e_1)$. For all triples, we get 

\begin{equation}\label{eq_loss_ranking}
\small
\begin{aligned}
Loss &= \sum_{i\in \mathcal{T}}\left(r^i - (e_2^i-e_1^i)\right)= \sum_{i\in \mathcal{T}}\left(r^i\right)  - \sum_{i\in \mathcal{T}}\left(e_2^i-e_1^i\right)
\end{aligned}
\end{equation}

\par \noindent Now we define the point in vector space represented by $\sum_{i \in \mathcal{T}}(e_2^i-e_1^i)$ to be the current point in the optimization and plot the loss concerning it, which is the norm of the loss in the equation \ref{eq_loss_ranking}. Since there could be multiple configurations of the entity embeddings for each such point, we assume the loss to be an optimum loss given a configuration of entity embeddings. I.e., the relation vectors to be modified such that each difference term $r-(e_2-e_1)$ is always greater than or equal to 0.
\par \noindent Let  $R=\sum_{i \in Triples}r_i$ and  $E=\sum_{i \in Triples}(e_2^i-e_1^i)$, then $Loss = \mid R-E \mid $ represents a cone. Now if we consider all the possible relation vector configurations and take all losses so that at each point in the vector space the minimum of each contribution is taken we get a piece-wise continuous function with conical regions and hyperbolic intersection of the cones as in figure \ref{loss_topology}.
\par \noindent For a path to exist between the start and an optimum global point under gradient descent, two conditions must hold\:
\begin{enumerate}
    \item The function must be continuous.
    \item At no point in the function must there be a point such that there exists no point in it’s neighborhood with a lesser value.
\end{enumerate}
The derived function satisfies both the above properties.
\end{proof}




\begin{figure}[h!]
  \includegraphics[scale=0.30]{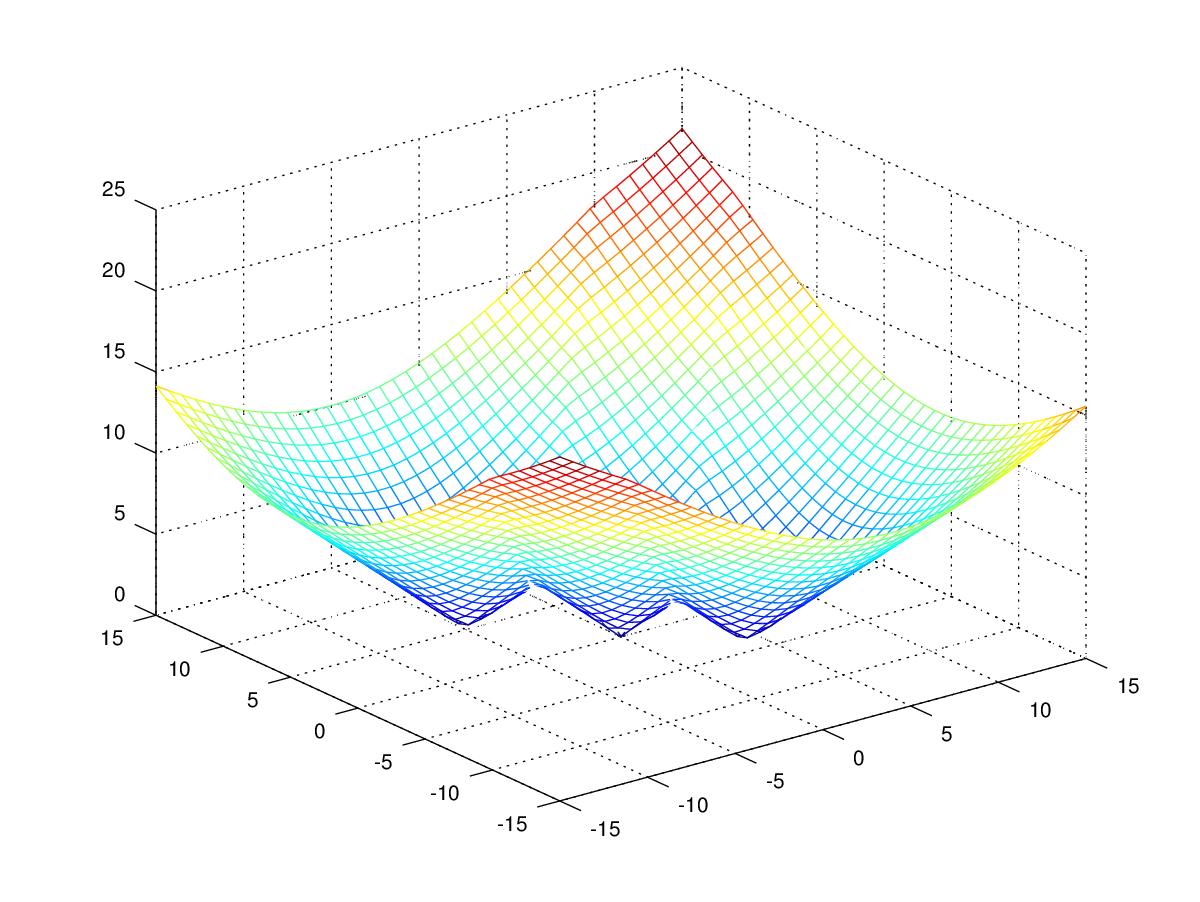}
  \caption{Loss function topology under the $l_1$ norm of the difference between the sum of relation vectors and entity vectors, demonstrating that convergence is possible from any starting point}
  \label{loss_topology}
  \vspace{-2mm}
\end{figure}
The above theorem proves convergence when all entities are updated simultaneously. However, this may not be possible in practice as the number of entities could be very large, causing memory errors. We introduce a simple modification to train the entities batch-wise, i.e., to update via gradient descent only a sample of the entities, thus reducing memory requirements. We shall see in the next theorem that this approach also converges.
\begin{theorem}\label{algo1_t2}
The entity vectors could be updated batch wise to monotonically reduce the loss till optimum is reached
\end{theorem}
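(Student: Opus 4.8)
The plan is to leverage Theorem~\ref{algo1_t1}, which already establishes that the global loss function under the $l_1$ norm is continuous and admits a descent path from any starting point to an optimum. I would reframe the batch-wise update as a block-coordinate descent scheme on the same loss landscape, where at each iteration only a subset (batch) of the entity vectors is modified while the remaining entities are held fixed. The goal is to show that restricting the gradient update to a subset of coordinates still produces a monotonically non-increasing sequence of loss values, so that convergence is preserved.

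First I would set up the loss as the separable-over-triples sum appearing in equation~\ref{eq_loss_ranking}, and observe that each entity vector participates only in the triples it belongs to. Fixing all entities outside the current batch freezes the contribution of triples not touching the batch, turning the objective into a function of the batch variables alone. Since the full loss is continuous and piecewise-conical (from Theorem~\ref{algo1_t1}), its restriction to the batch coordinates inherits continuity and the same local descent property: at any non-optimal configuration of the batch there exists a neighboring point of strictly lower value. Then I would argue that a gradient descent (or subgradient, at the conical seams) step on the batch variables with an appropriately small step size yields $\text{Loss}^{(t+1)} \leq \text{Loss}^{(t)}$, because moving only the batch coordinates along a descent direction of the restricted function cannot increase the total loss. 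Cycling through batches so that every entity is eventually updated guarantees progress toward the optimum, and monotone boundedness below (the loss is a norm, hence $\geq 0$) gives convergence of the loss sequence.

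The main obstacle I expect is handling the non-smoothness at the hyperbolic intersections of the cones: the loss is only piecewise differentiable, so a naive gradient argument breaks down exactly at the seams where the active triple-contributions change. I would address this by working with subgradients and invoking the second property from Theorem~\ref{algo1_t1} (that no point lacks a strictly-decreasing neighbor except the optimum), which transfers to the batch restriction and lets me choose a valid descent direction even at kinks. A secondary subtlety is ensuring that updating a batch does not inadvertently raise the loss on triples that straddle the batch boundary (i.e., triples with one endpoint inside and one outside the batch); I would note that the step-size can always be chosen small enough that the net effect over all affected triples remains a decrease, since the restricted function is continuous and we move along one of its descent directions. With these points settled, the monotone-decrease claim follows and convergence is immediate from boundedness below.
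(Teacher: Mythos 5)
Your proposal is correct and lands on the same high-level strategy as the paper --- reduce the batch update to a per-block descent step and appeal to Theorem~\ref{algo1_t1} for the existence of a loss-decreasing move and for eventual convergence --- but the key middle step is executed differently. The paper writes the residual explicitly as a sum of per-entity deviations, $\vec{d} = (\vec{e}_1-\vec{e}_1^{~'}) + \cdots + (\vec{e}_n-\vec{e}_n^{~'})$, and argues that replacing a single $\vec{e}_i^{~'}$ by an $\vec{e}_i^{~''}$ with $\mid \vec{e}_i-\vec{e}_i^{~''}\mid \leq \mid \vec{e}_i-\vec{e}_i^{~'}\mid$ cannot increase the norm of the total residual; it then invokes Theorem~\ref{algo1_t1} to assert such an update exists and can be applied recursively. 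You instead cast the batch update as block-coordinate (sub)gradient descent on the restriction of the loss to the batch variables, arguing that the restricted function inherits continuity and the ``every non-optimal point has a strictly better neighbor'' property from Theorem~\ref{algo1_t1}. Your framing buys two things the paper's argument glosses over: first, the paper's monotonicity claim is not actually valid as stated (the norm of a sum of vectors is not monotone in the norm of one summand --- shrinking $\mid\vec{e}_i-\vec{e}_i^{~'}\mid$ can move that summand in a direction that increases $\mid\vec{d}\mid$), whereas descending along a (sub)gradient of the restricted objective sidesteps this entirely; second, you explicitly handle the non-smooth seams and the triples straddling the batch boundary, neither of which the paper addresses. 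The paper's decomposition is more concrete and ties directly to Algorithm~1, but your block-coordinate-descent version is the sounder argument.
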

\begin{proof}
\begin{algorithm}
\label{algo_1}
\small
\SetAlgoNoLine
 Initialize the relation and entity embeddings randomly; \leavevmode\newline 
 \While{\textbf{not} converged}{
    \begin{enumerate}[leftmargin=*,label=$\bullet$,noitemsep,partopsep=0pt,topsep=0pt,parsep=0pt]
        \item Select a subset of entities
        \leavevmode\newline
        $\{e_1,e_2...e_n\} \subseteq E$
        
        \item Select the subset of 1-hop \& 2-hop triples $\mathcal{T}_{batch} \subseteq \mathcal{T} \mid e \in \uptau \land \uptau \in \mathcal{T}_{batch} \land e \in \{e_1,e_2...e_n\}$ 
        
        \item Input $\mathcal{T}$ to KGGAT-SEP model and compute a  forward pass to get the new  entity embeddings for  the entities in the current batch keeping the other entity embeddings fixed.
        
        \item Compute the loss according to $\small L(\Omega) =  \sum_{\uptau_{ht} \in \mathcal{T}^{pos}}\sum_{\uptau_{ht}^{'} \in \mathcal{T}^{neg}}max\{d_{\uptau_{ht}^{'}} - d_{\uptau_{ht}}+ \gamma, 0 \}$
        
        \item Back propagate using gradient descent to update $\{e_1,e_2...e_n\} \subseteq E$
    \end{enumerate}
 }
 \caption{Algorithm for learning entity embeddings batchwise using the margin ranking loss}
\end{algorithm}

Consider a set of vectors $\vec{e}_1,\vec{e}_2...\vec{e}_n$ and the resultant $\vec{r}$.
\[\vec{r} = \vec{e}_1 + \vec{e}_2 + ... + \vec{e}_n\]
Also consider another set of entities $\vec{e}_1^{~’},\vec{e}_2^{~’}...\vec{e}_n^{~’}$.
The difference between $\vec{r}$ and the sum of new set of vectors is
\begin{align*}
\vec{d} &= \vec{r}-(\vec{e}_1^{~’}+\vec{e}_2^{~’}+.....+\vec{e}_n^{~’}) \\
&=(\vec{e}_1-\vec{e}_1^{~’}) + …… + (\vec{e}_n-\vec{e}_n^{~’})
\end{align*}
\par \noindent Now if we update a vector $\vec{e}_i^{~’}$  to $\vec{e}_i^{~”}$ to be closer to $\vec{e}_i$ such that \[\mid \vec{e}_i-\vec{e}_i^{~’} \mid >= \mid \vec{e}_i-\vec{e}_i^{~”} \mid\]
Then,
\begin{align*}
&\mid \vec{r}-(\vec{e}_1^{~’}+....+\vec{e}_i^{~’}+....+\vec{e}_n^{~’}) \mid>=\\
&\mid \vec{r}-(\vec{e}_1^{~’}+....+\vec{e}_i^{~”}+....+\vec{e}_n^{~’}) \mid 
\end{align*}
\par \noindent Theorem \ref{algo1_t1} shows that such an update exists and performing it recursively for other entity vectors till optimum is possible under the given framework. Algorithm 1 details the batch wise learning.

\end{proof}

\end{document}